\documentclass{article}

\usepackage{PRIMEarxiv}

\usepackage[utf8]{inputenc} 
\usepackage[T1]{fontenc}    
\usepackage{hyperref}       
\usepackage{url}            
\usepackage{booktabs}       
\usepackage{amsfonts}       
\usepackage{nicefrac}       
\usepackage{microtype}      
\usepackage{lipsum}
\usepackage{fancyhdr}       
\usepackage{graphicx}       
\usepackage{amsmath}
\usepackage{amssymb}
\usepackage{mathtools}
\usepackage{amsthm}
\usepackage{multirow}
\usepackage{tcolorbox}
\usepackage{listings}
\usepackage{xcolor}
\usepackage{makecell}
\usepackage{natbib} 
\usepackage{microtype}
\usepackage{graphicx}
\usepackage{subcaption}
\usepackage{booktabs} 
\usepackage[table]{xcolor} 
\usepackage{hyperref}

\usepackage{arydshln} 
\graphicspath{{media/}}     
\usepackage{authblk}
\usepackage{xcolor}
\usepackage{listings}
\usepackage{amsthm}
\usepackage{algorithm}
\usepackage{algorithmic} 
\newtheorem{proposition}{Proposition}
\lstdefinestyle{codestyle}{
    backgroundcolor=\color{gray!10},   
    commentstyle=\color{green!60!black},
    keywordstyle=\color{blue},
    numberstyle=\tiny\color{gray},
    stringstyle=\color{purple},
    basicstyle=\ttfamily\small,
    breaklines=true,                
    captionpos=b,                    
    keepspaces=true,                 
    numbers=left,                    
    showspaces=false,                
}

\lstdefinestyle{promptstyle}{
    basicstyle=\ttfamily\small,
    backgroundcolor=\color{black!5},
    breaklines=true,
    frame=single,
}

\title{medR: Reward Engineering for Clinical Offline Reinforcement Learning via Tri-Drive Potential Functions
}

\author[1]{\textbf{Qianyi Xu}}
\author[2]{\textbf{Gousia Habib}}
\author[1]{\textbf{Feng Wu}}
\author[3]{\textbf{Yanrui Du}}
\author[1]{\textbf{Zhihui Chen}}
\author[1]{\textbf{Swapnil Mishra}}
\author[1]{\textbf{Dilruk Perera}}
\author[1]{\textbf{Mengling Feng}}

\affil[1]{National University of Singapore, Singapore \protect\\ \texttt{xuqianyi, zhihui.chen@u.nus.edu} \protect\\
\texttt{fengwu, swapnil.mishra, dilruk, mornin@nus.edu.sg}}
\affil[2]{Finish Centre of Artificial Intelligence, University of Helsinki, Finland \protect\\ \texttt{gousia.habib@helsinki.fi}}
\affil[3]{Harbin Institute of Technology, China \protect\\ \texttt{yrdu.hit@gmail.com}}

\begin{document}
\maketitle

\begin{abstract}
  Reinforcement Learning (RL) offers a powerful framework for optimizing dynamic treatment regimes (DTRs). However, clinical RL is fundamentally bottlenecked by reward engineering: the challenge of defining signals that safely and effectively guide policy learning in complex, sparse offline environments. Existing approaches often rely on manual heuristics that fail to generalize across diverse pathologies. To address this, we propose an automated pipeline leveraging Large Language Models (LLMs) for offline reward design and verification. We formulate the reward function using potential functions consisted of three core components: survival, confidence, and competence. We further introduce quantitative metrics to rigorously evaluate and select the optimal reward structure prior to deployment. By integrating LLM-driven domain knowledge, our framework automates the design of reward functions for specific diseases while significantly enhancing the performance of the resulting policies. 
\end{abstract}

\section{Introduction}

Critical care medicine is fundamentally defined by high-stakes, sequential decision-making, RL is a natural fit for optimizing DTRs
where an agent must make sequential decisions that adapts to a patient's changing state over time. 
Reward serves as the core of RL that not only guides the agent learning but also defines the optimization problem \cite{zhuimproving}. Small changes in reward function weights can cause profound policy changes\cite{roggeveen2024reinforcement} and bad reward signals affect beyond ``win-or-lose" in gaming and can lead to ``life-or-death" in healthcare. Errors in sensitive fields such as autonomous driving have already been shown to be problematic \cite{knox2023reward}. However, reward engineering in clinical RL is less studied and benchmarked. This underscores an urgent need to design better rewards that align with clinical goals \cite{basu2025reward}.

Current rewards in RL for DTRs largely depend on manual and static design. Drawing an analogy to recent advances in LLM, we categorize existing clinical reward mechanisms into three distinct paradigms: sparse Outcome Reward Model (ORM), which assigns scalar rewards based on terminal patient outcomes like survival \cite{komorowski2018artificial}. The second is dense Process Reward Model (PRM), which depends on dense feedback based on intermediate physiological state changes \cite{zhang2024optimizing}. To mitigate the sparsity of terminal outcomes, the standard approach augments them with intermediate feedback to form a composite reward ($R_{OPRM}=R_{ORM}+R_{PRM}$) \cite{raghu2018model, lu2024reinforcement}. In this paradigm, clinicians manually select disease-relevant features and encode them using simple heuristic logic. However, the terminal reward is often weighted disproportionately high, overshadowing the dense shaping signals and the selection of specific parameters and weights remains largely ad-hoc, lacking systematic justification \cite{lu2024reinforcement, zhang2024optimizing}.

Despite efforts in designing various reward signals to accelerate policy learning, there are currently no guidelines on designing effective rewards for clinical RL due to several major challenges. First, outcomes in clinical settings are sparse and delayed. While mortality is the ultimate ground truth outcome, it occurs only at the end of ICU stay which leads to credit assignment problem. Second, the target in a treatment process is often implicit and complicated. It extends beyond binary survival to encompass the quality of recovery. There is a profound gap between the high-dimensional complexity of patient physiology and the scalar reward value required by RL algorithms. This gap leads to misaligned objectives, such as the ``Pyrrhic Victory''\cite{livingston2019pyrrhic} where an agent optimizes for survival via aggressive interventions or reward hacking. Third, rewards are not verifiable. Unlike robotics or games, clinical RL is strictly offline, precluding online exploration to validate reward signals. Consequently, current rewards rely on brittle, manual heuristics that lack interpretability and fail to generalize across different diseases or datasets. 

Recent advancements have identified LLMs as a promising solution to bridge the gap between high-level human intent and low-level reinforcement signals. While early work utilized LLMs as direct evaluators of agent behavior\cite{kwon2023reward}, more recent methods focus on using LLMs to generate executable reward function codes\cite{ma2023eureka,xie2023text2reward}. We leverage this generative capability to construct a reward function that captures the nuance of DTRs without the prohibitively high cost of manual trial-and-error engineering. Our pipeline leverages LLMs to bridge the semantic gap between medical knowledge and mathematical representation, utilizing a ``Tri-Drive'' mechanism comprising Survival, Confidence, and Competence to guide both reward generation and offline verification. Our specific contributions are:
\begin{itemize}
    \item We propose an automated pipeline for clinical RL that utilizes LLMs to perform interpretable feature selection and formulate a novel reward structure based on potential functions. 
    
    \item We design the reward functions via a Tri-Drive mechanism and derive corresponding offline quantitative metrics to act as a proxy for ground truth and verify reward functions.
    
    \item We empirically validate our framework on three distinct, high-stakes clinical tasks and demonstrate that policies trained with our automated rewards consistently outperform those using standard rewards, achieving 77.3\%, 66.7\%, and 60.3\% higher WIS scores over clinician baselines on three tasks, proving the framework's generalizability across diverse diseases.
\end{itemize}
\begin{figure*}[htbp]
    \centering
    \includegraphics[width=0.9\linewidth]{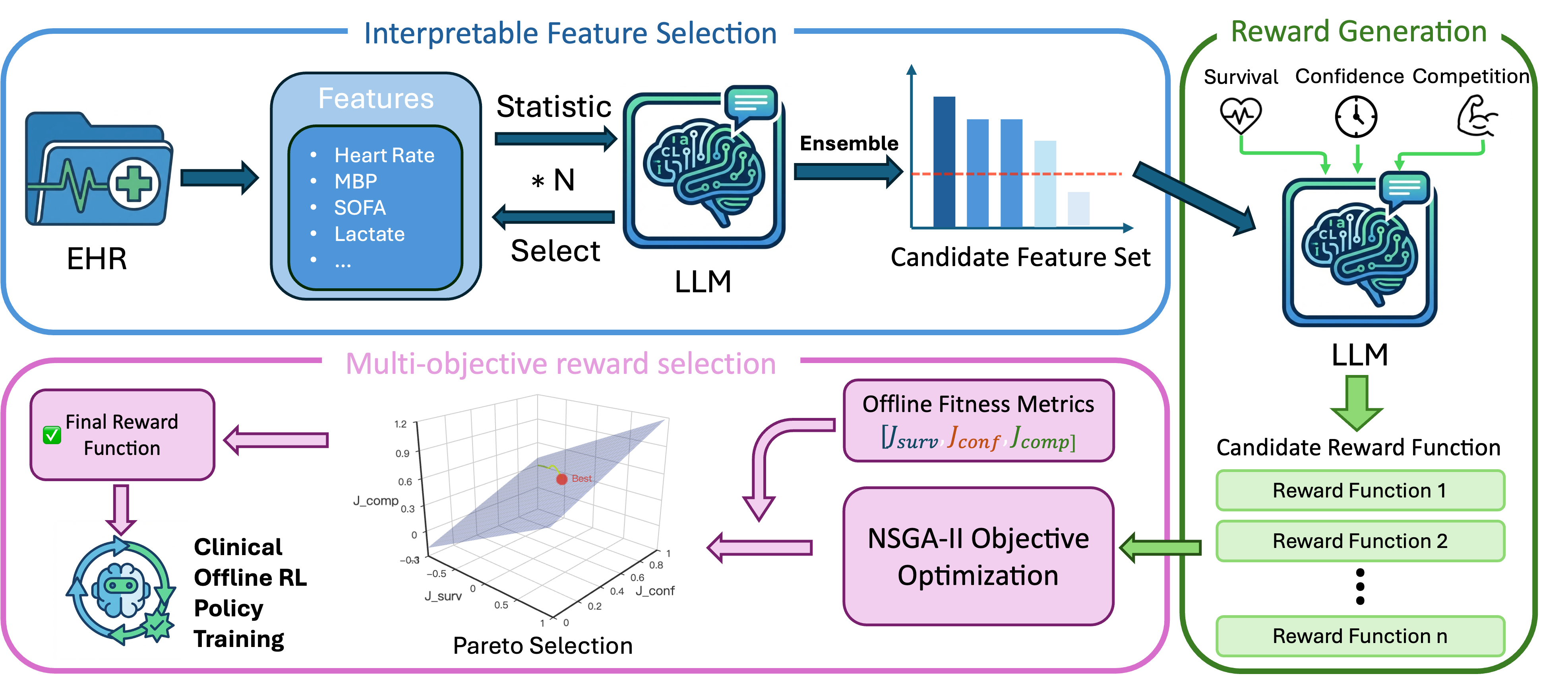} 
    \caption{Framework pipeline of medR.}
    \label{fig:framework}
\end{figure*}

\section{Background}
\label{Background}

We model the underlying clinical dynamics as a Partially Observable Markov Decision Process (POMDP), defined by the tuple $(\mathcal{S}^*, \mathcal{A}, \mathcal{T}, \mathcal{R}, \Omega, \mathcal{O}, \gamma)$. Here, $\mathcal{S}^*$ denotes the unobservable, true physiological state of the patient, and $\mathcal{A}$ represents the space of clinical interventions. The agent can not access $\mathcal{S}^*$ directly; instead, it receives intermittent observations $o_t \in \Omega$ like vital signs and lab results $\mathcal{O}(o|s^*, a)$. To ensure computational tractability for offline policy learning, we approximate this POMDP as a Markov Decision Process (MDP) defined by $(\mathcal{S}, \mathcal{A}, P, R, \gamma)$ during policy training. We explicitly construct the proxy state space $\mathcal{S}$ in our reward functions to capture temporal irregularities inherent in clinical data: a state is defined as $s_t = (o_t, \Delta t)$, where $\Delta t$ represents the time elapsed since the last observation. While standard MDP approximations often discard the uncertainty arising from partial observability, we explicitly mitigate this gap through our reward engineering framework. 

\subsection{Intrinsic Motivation}
In this work, our reward function design follows the evolutionary perspective proposed in \cite{singh2009rewards}. They distinguish between reward signals and fitness functions where reward signals are internal scalar values that drive learning within an agent and fitness is the external evaluation that measures an agent's success. For bounded agents in an offline environment, the optimal reward function is not the fitness function\cite{singh2010intrinsically}. The design of reward functions should optimize intrinsic motivations (IM), unlike previous literature about IM that focuses on incentivizing exploration which is dangerous in our offline setting, in our paper, we form it as a threefold drive: the drive to restore homeostasis(survival), the drive to lower uncertainty(confidence), and the drive to minimize cost(competence). In Hull's Drive Reduction Theory\cite{hull1945principles}, organisms are motivated to maintain physiological equilibrium, we treat homeostasis as a proxy of survival to optimize in the reward function. Unlike curiosity-driven frameworks that encourage exploration under uncertainty\cite{berlyne1960conflict,schmidhuber1991possibility}, in order to design a reward function specific to our offline environment\cite{singh2009rewards}, we conversely suppress uncertainty due to the sensitivity in critical care\cite{djulbegovic2011uncertainty}. \citet{white1959motivation} furthers the learning process of effective interaction with the environment, which is transferrable as effective treatment under clinical setting.

\subsection{Potential-Based Reward}
Reward shaping was used in previous literature on clinical RL to address the challenge of sparse feedback\cite{den2024guideline}. To ensure that this augmentation does not alter the optimal policy $\pi^*$, \citet{ng1999policy} formalized Potential-Based Reward Shaping (PBRS). They proved that if the shaping term is defined as the difference of a potential function $\Phi(s)$ over states:$F(s, a, s') = \gamma \Phi(s') - \Phi(s)$, then the optimal policy under the shaped reward $R' = R + F$ is identical to that under $R$. Later extensions generalized this framework to state-action potentials~\cite{wiewiora2003principled} and dynamic, time-dependent potentials $\Phi(s,t)$~\cite{devlin2012dynamic}. Most recently, \citet{forbes2024potential} applied this theory to intrinsic motivation (PBIM), utilizing potentials to mathematically neutralize the policy-altering bias of exploration bonuses such as curiosity. However, the strict policy invariance mandated by PBRS relies on the assumption that the base reward signal which in our case is the mortality already defines the ideal behavior. Here we keep the form of potential functions as indicators for patient health transients and transform the problem into potential-based reward engineering.

\section{Methodology}
\subsection{Interpretable Feature Selection}
We leverage the semantic reasoning capabilities of LLMs to select critical features for constructing the reward functions. Figure\ref{fig:framework} shows the framework of our method. We first compute a statistical summary for all candidate features in the dataset $\mathcal{D}$. Let $f$ be a candidate feature and $y$ be the patient outcome mortality. We generate a metadata tuple $\mathcal{M}_f = (\mu_f, \sigma_f, \eta_f,\rho_f^{outcome}, \rho_f^{action})$, where $\mu$ is the mean and $\sigma$ is the standard deviation of the non-missing values. We explicitly analyze the missingness rate $\eta$ because Electronic Health Records(EHRs) are inherently sparse, simply using imputed data for reward computation is misleading and can introduce bias. $\rho_{outcome}, \rho_{action}$ are the correlation between feature $x$ and patient outcome $y$ or action $a$, for each mortality and action dimension, we compute the Pearson correlation coefficient with every feature $f$ in the state space.

We create a structured prompt that feeds this statistical metadata $\mathcal{M}$ into an LLM. The LLM acts as a clinical data scientist to select $K$ critical state features that affect the specific disease treatment. Crucially, for every selected feature, the model must generate a rationale explaining its selection. To mitigate the hallucination and instability inherent in LLM generation, we employ a stochastic ensemble selection strategy. The final feature set $\mathcal{F}^*$ is determined via majority voting:
$\mathcal{F}^* = \{ f \in \mathcal{F}_{all} \mid \frac{1}{N} \sum_{i=1}^N \mathbb{I}(f \in \mathcal{S}_i) \ge \tau \}$ where $\mathcal{S}_i$ is the set of features selected in iteration $i$, and $\kappa$ is the consensus threshold. We then double check with clinicians on the features chosen by LLM and the rationales behind them. In this way we effectively simulate the knowledge of multiple clinicians to vote on the most robust candidates and validate with real clinicians to guarantee feature reliability. 

\subsection{Potential Based Reward Function Generation}
We design a health potential that reflects the quality of physiological states. Past reward functions often define specific rules that directly scores the action, for example, Reward $+1$ if $70<mbp<80$, which will likely cause reward hacking and encouraging actions that can directly increase blood pressure. Instead, we form it as the gradient of potential to capture the patient state change. To bridge the gap between sparse outcome labels and dense clinical supervision, the reward function $R$ is composed of three distinct drives—Survival, Confidence, and Competence leveraging the intrinsic motivation theory \cite{singh2009rewards}. We define the reward function as follows:
\begin{equation}
\small
R(s, a, t, s', a', t') = \left[ \gamma \Phi(s_{t+1}, t+1) - \Phi(s_t, t) \right] - \lambda \mathcal{C}(a_t)
\end{equation}
\begin{equation} 
\small
    \Phi(s_t, t) = \delta(t) \cdot \sum_{f \in \mathcal{F}} \omega_f \cdot \mathcal{S}_f(v_{t,f}) \cdot \mathcal{U}_f(\Delta t_{t,f})
\end{equation}
where $\gamma$ is the discount factor, $\Phi(\cdot)$ is the health potential, $\delta(t)$ is the strategic decay, $\mathcal{S}_k(v_{t,f})$ is the survival score with confidence weight $\mathcal{U}_f(\Delta t_{t,f})$ and $\mathcal{C}(\cdot)$ is the competence cost.

We use physiological stabilization as the proxy for survival. By embedding the metadata of critical features as part of the prompt, we personalize the reward design to handle critical patients with comorbidities. The LLM is prompted to design a potential function that rewards states closer to homeostasis. In the offline clinical setting, data irregularity poses a significant risk: a normal heart rate recorded 12 hours ago is not a reliable indicator of current stability. Since we are unable to refresh the data by ordering new lab tests, we utilize the elapsed time $\Delta t$ as indicators of freshness of the data $s_t = (o_t, \Delta t_t)$. The LLM generates confidence weights for each critical feature that inversely scales the reward based on $\Delta_{t,f}$. This mechanism penalizes stale states, effectively discounting the survival reward when data is uncertain to reduce epistemic uncertainty. We also add a strategic decay to prevent unnecessarily longer ICU stay. To prevent the agent from achieving survival via excessive or harmful interventions, we incorporate a competence component focused on treatment efficiency. Mathematically, this serves as a regularization term: if two treatments result in the same physiological transition $s_t \rightarrow s_{t+1}$, the potential function assigns a higher value to the less invasive action, penalizing over-treatment or flailing behavior. By decoupling the accumulating action cost $\mathcal{C}(a)$ from the telescoping potential $\Phi(s, t)$\ref{prop:non_telescoping}, we effectively formulate the RL objective as the Lagrangian of a Constrained Markov Decision Process (CMDP). The agent maximizes the Lagrangian $\mathcal{L} = \text{Physiology} - \lambda \cdot \text{Toxicity}$, ensuring that high-stability states are only pursued if the cost of the intervention path does not exceed the value of the stabilization\ref{prop:lagrangian_equivalence}. Instead of manual heuristic tuning, the LLM acts as the logic designer. We feed the definitions of these three drives along with the feature metadata into the LLM, which autonomously synthesizes the logic into executable Python code for the reward function. To mitigate the risk of hallucination and capture diverse clinical perspectives, we generate $N$ distinct candidate functions in parallel, creating a diverse pool of candidates for the subsequent verification phase.

\subsection{Tri-Drive Reward Selection}
Since online verification is ethically prohibitive in clinical settings, we cannot evaluate reward candidates by deploying them on patients. Instead, we propose a set of offline fitness metrics to rigorously assess the quality of each generated reward function $R$. We formulate reward design as a Tri-Drive Selection Problem aiming to balance three conflicting objectives based on trajectory-level labeled ground truth: Survival, Confidence, and Competence Fitness.Survival Fitness ($J_{surv}$) validates whether the induced reward signal correlates with actual patient outcomes. We define a ground-truth trajectory score $G(\tau)$ that consists of both mortality and Sequential Organ Failure Assessment(SOFA) score\cite{lambden2019sofa} increase compared to baseline(time when admitted to ICU). The fitness is the correlation $\rho$ between the cumulative reward and $G(\tau)$:
\begin{equation}
G(\tau) = \mathbf{1}(\text{Survival}) + \frac{1}{T} \sum_{t=0}^{T-1} \mathbf{1}(|\text{SOFA}_t - \text{SOFA}_{baseline}| < \epsilon)
\end{equation}
\begin{equation}
J_{surv}(R) = \rho\left(R_{\tau}, G(\tau)\right)
\end{equation}
Confidence Fitness ($J_{conf}$) penalizes reward functions that assign high credit to decisions made under high epistemic uncertainty. We define an uncertainty proxy $U(\tau)$ as the average time-gap $\Delta t$ of all the features $f$ in the critical feature set $F$. The fitness is the negative correlation between the cumulative reward and trajectory uncertainty:
\begin{equation}
    U(\tau) = \frac{1}{T \cdot F} \sum_{t=0}^{T-1} \sum_{f=1}^{F} \Delta t_{t,f}.
\end{equation}
\begin{equation}
J_{conf}(R) = -\rho\left(R_{\tau}, U(\tau)\right)
\end{equation}
Competence Fitness ($J_{comp}$) ensures that the reward function prioritizes efficient strategies that achieve or maintain high physiological stability with minimal necessary intervention. We define feature type as $\mathcal{T}=\{N, D_l,D_h\}$ where $N$ denotes Normal-Range features, and $D_l, D_h$ denote Directional features where minimization or maximization is physiologically optimal, respectively. We define the unified homeostasis score $H(x)\in [0,1]$ for a feature of type $T \in \mathcal{T}$ as:
\begin{equation}
\small
    h(f) = 
    \begin{cases} 
        \mathbf{1}_{f \in \mathcal{I}} + \mathbf{1}_{f \notin \mathcal{I}} \cdot \sigma\left( k \left[ 0.5 - \frac{d(f, \mathcal{I})}{\text{IQR}} \right] \right)
        & \text{if } T = N, \\
        
        \sigma\left( k \left[ 0.5 - f \right] \right)
        & \text{if } T = D_l, \\

        \sigma\left( -k \left[ 0.5 - f \right] \right)
        & \text{if } T = D_h,
    \end{cases}
\end{equation}
\begin{equation}
    H(s_t) = \frac{1}{|F|} \sum_{f \in F} h(_{t,f}).
\end{equation}
We then define the Efficiency $E_t$ as the homeostasis increase penalized by the magnitude of the dose taken:
\begin{equation}
    E_t = H(s_{t+1}) - H(s_t) - \alpha \cdot \bar{a}_t.
\end{equation}
\begin{equation}
    J_{comp} = \rho\left(R_\tau, E_\tau\right),
\end{equation}
where $\sigma(z) = (1 + e^{-z})^{-1}$ is the standard sigmoid function, $d(f, \mathcal{I})$ is the distance from feature $f$ to the interval $\mathcal{I}$, $\text{IQR}$ is the interquartile range width used for normalization. $\alpha$ and $k$ are cost parameters, $\bar{a}_t$ is the magnitude of the intervention.
We employ the Non-Dominated Sorting Genetic Algorithm II (NSGA-II) \cite{deb2002fast} to rank the generated candidates based on these three objectives. Instead of selecting a single weighted average, we identify the Pareto Frontier where no objective can be improved without degrading another. From this frontier, the solution closest to the utopia point representing the best trade-off.

\section{Experiments}
\subsection{Tasks}
We evaluate our framework on three distinct critical care tasks, covering different datasets, demographics, and action space types.\\
\textbf{Sepsis Treatment}
We utilize the Medical Information Mart for Intensive Care (MIMIC)-IV v3.1\cite{johnson2020mimic} database to learn optimal Intravenous (IV) fluid and Vasopressor administration strategies for sepsis patients. We select adult patients (age $\ge$ 18) satisfying the Sepsis-3 criteria \cite{singer2016third}, and exclude patients with missing mortality, demographics, and interventions, extreme missingness ($>$60\% missing data) or short hospital stays ($<$24 hours). We define a septic shock starting from 24 hours before diagnosis and ending at 48 hours post-diagnosis. After filtration, 7,501 patients with 46 features are retained in the dataset. The action space is discrete consisting of two primary interventions: IV fluids and Vasopressors. We discretize IV fluid volume and vasopressor dosage into 5 bins each, resulting in a combined action space of $5 \times 5 = 25$ discrete actions. \\
\textbf{Mechanical Ventilation}
We use the multi-center eICU Collaborative Research Database (eICU-CRD)\cite{pollard2018eicu} to optimize ventilator settings for patients with respiratory failure. The cohort includes adult patients who underwent invasive mechanical ventilation (MV) for at least 24 hours. We select patients aged \(\geq\)16 who received MV for at least 24 hours. We also exclude patients with missing mortality outcomes, demographics, interventions, and those ventilated for \(>\)14 days. The final cohort contains 2168 patients with 41 features. The action space is discrete, controlling three key ventilator parameters: Positive End-Expiratory Pressure (PEEP), Fraction of Inspired Oxygen (FiO$_2$), and Tidal Volume. We define a discrete action set based on clinically relevant adjustments for each parameter, resulting in a composite discrete action space of 18 actions (PEEP: 2 levels, FiO$_2$: 3 levels, Tidal Volume: 3 levels).\\
\textbf{Renal Replacement Therapy}
We apply our method to the AmsterdamUMCdb \cite{thoral2021sharing} to optimize continuous renal replacement therapy (RRT) dosing. We select adult patients (age $\ge$ 18) who received CRRT during the ICU stay where the stay is no longer than 60 days. There are 857 patients with 18 features in the cohort. Unlike the previous tasks, the action space here is continuous. The agent controls the effluent flow rate clipped to the range $[0, 60]$ ml/kg/h, a critical parameter determining the intensity of dialysis.

For all tasks, clinical time-series data (vitals, labs, medications) are aggregated into 1-hour time steps. Missing values are imputed using forward-filling for time-varying features. For CRRT effluent rate, we imputed values based on RRT sessions, defining a new session whenever the gap between two dialysis records exceeded 8 hours. All continuous state features are normalized to zero mean and unit variance to ensure training stability. For policy training, we use common safe offline RL methods, Batch-Constrained Q-Learning (BCQ)\cite{fujimoto2019off} for discrete action space tasks (Sepsis and Ventilation), and Implicit Q-Learning (IQL)\cite{kostrikov2021offline} for continuous action space task (RRT).

\subsection{Baselines}
To rigorously evaluate the effectiveness of our automated reward design framework, we compare it against two categories of baselines: manually engineered heuristic rewards and direct LLM approaches.

\subsubsection{Heuristic Reward Design}
We implement three standard reward structures commonly used in clinical RL literature. These serve as the "human-expert" baselines, representing varying degrees of clinical foresight:\\
\textbf{Outcome Reward Model (ORM):} A purely outcome-oriented function that assigns a non-zero reward only at the terminal step, $+100$ for survival (discharge) and $-100$ for mortality. The value of $\pm100$ is used in both Sepsis\cite{komorowski2018artificial} and Ventilation\cite{peine2021development} tasks in previous literature. Since there is no past work that utilized outcome reward for RRT task, we adopt the same setting of $\pm100$ for RRT. This baseline tests the agent's ability to solve the credit assignment problem without intermediate guidance. We explicitly utilized ICU-discharge mortality rather than 90-day mortality as our terminal state definition in the baselines. We argue that an RL agent can only be held accountable for outcomes within its scope of control. An agent penalized for a death occurring 60 days post-discharge may erroneously learn that stabilizing and discharging a patient is suboptimal, leading to conservative strategies. \\
\textbf{Process Reward Model (PRM):} A process-oriented function that provides immediate feedback based on step-wise improvements in physiological stability. It incentivizes short-term stabilization without long-term survival awareness. We use the reward function fully based on SOFA score\cite{zhang2024optimizing} for Sepsis. For Ventilation, though we note that there is one process reward based on ventilator free days\cite{yousuf2025intellilung}, the reward calculated on our cohort is not dense enough for efficient policy training, we therefore adopt the intermediate reward from the OPRM\cite{liu2024reinforcement}. This further indicates that a fixed reward function may not be transferrable between different cohorts even with the same dataset. For RRT, we use the reward function based on Estimated Glomerular Filtration Rate (eGFR)\cite{zhang2024reinforcement}.\\
\textbf{Outcome Process Reward Model(OPRM):} A weighted combination of the sparse and dense signals. This represents the state-of-the-art manual design, aiming to balance immediate physiological regularization with the ultimate goal of patient survival. We select from current SOTA methods \cite{perera2026smart}for Sepsis and \cite{liu2024reinforcement}for Ventilation. Due to the lack of work on using OPRM for RRT, we choose to combine the PRM(eGFR) with ORM($\pm100$) as OPRM.

\subsubsection{LLM-based Design}
We also compare against methods that leverage LLMs for reward generation without our proposed iterative refinement process. Since there are limited methods that leverage LLM to generate rewards for offline RL, we design two variants:\\
\textbf{LLM-as-Reward (LLMR):} In this setting, the LLM is used as a direct proxy for the reward function or a scorer of the clinicians' actions. We embed the whole patient trajectory $(s,a,s')$ with final outcome mortality as part of the prompt along with clinical context, based on the mortality of the patient, we let the LLM do credit assignment of $+15$ for survival or $-15$ for mortality along the trajectory. The LLM will choose steps that it thinks is the most crucial for the decision-making and the rest of the steps will be assigned 0. After the reward generation we will make sure the sum of reward is $+15$ or $-15$. This tests the LLM's innate ability to evaluate clinical actions without explicit function coding.\\
\textbf{Code Generation (CodeGen):} The LLM is prompted to write a Python reward function in a single pass, given only the task description and list of available features without any further instructions on the reward components. This baseline evaluates the code-generation capability of the LLM absent the Tri-Drive and potential functions utilized in our method.
\subsection{Evaluation Metrics}
We employ a comprehensive set of metrics to evaluate both the intrinsic quality of the generated reward functions and the extrinsic performance of the resulting RL policies.\\
\textbf{Action Agreement among Survivors:} We calculate the percentage of time steps among the whole trajectory where the RL agent's chosen action matches the clinician's action, specifically within the cohort of patients who survived. High agreement on survivors suggests the agent learns safe, effective behaviors from successful clinical examples.\\
\textbf{Off-Policy Evaluation(OPE)}
To quantify the policy's theoretical performance, we employ Weighted Importance Sampling (WIS), a standard unbiased estimator for off-policy value estimation. To facilitate direct comparison, we use the same rewards generated by DeepSeek-R1 that was not involved in training nor as a baseline for WIS calculation on all the policies. We also plot WIS training curve to make sure that the return is higher as we train the policy.\\
\textbf{Mortality vs. Cumulative Reward:} To prove that our reward is highly correlated with ground truth mortality, we plot the observed patient mortality rate against cumulative reward along patient trajectories assigned by the clinician's policy. A monotonic decreasing trend indicates that the learned value function correctly identifies patient risk. We compare with PRM which is the only baseline that also does not contain mortality as part of rewards.

\begin{table*}[htbp]
\centering
\caption{Agreement rates among survivors compared to clinician actions across tasks.}
\label{tab:agreement}
\resizebox{\textwidth}{!}{
\begin{tabular}{lcccccccccc}
\toprule
\textbf{Method} 
& \multicolumn{3}{c}{\textbf{Sepsis}} 
& \multicolumn{4}{c}{\textbf{Ventilation}} 
& \multicolumn{2}{c}{\textbf{RRT}} \\
\cmidrule(lr){2-4} \cmidrule(lr){5-8} \cmidrule(lr){9-10}
& Joint & IV Fluids & Vasopressors
& Joint & PEEP & FiO$_2$ & Tidal Volume
& Dose & On/Off \\
\midrule
ORM 
& $25.37 \pm 0.78$ & $30.01 \pm 0.69$ & $77.98 \pm 1.01$
& $19.79 \pm 3.13$ & $74.80 \pm 3.13$ & $47.87 \pm 2.26$ & $48.23 \pm 3.41$
& $50.77 \pm 15.96$ & $67.54 \pm 16.14$ \\

PRM 
& $27.54 \pm 1.03$ & $33.17 \pm 0.96$ & $78.47 \pm 0.95$
& $21.24 \pm 2.37$ & $74.49 \pm 3.26$ & $49.34 \pm 2.60$ & $50.70 \pm 3.51$
& $54.64 \pm 10.51$ & $74.45 \pm 3.03$ \\

OPRM 
& $27.51 \pm 1.39$ & $33.04 \pm 1.27$ & $78.44 \pm 1.08$
& $21.25 \pm 2.58$ & $73.24 \pm 3.63$ & $49.30 \pm 2.22$ & $50.02 \pm 2.88$
& $56.46 \pm 11.66$ & $74.83 \pm 2.46$ \\
\hdashline \noalign{\vskip 2pt}
LLMR$_{\texttt{+GPT-OSS-20B}}$ 
& $24.78 \pm 0.48$ & $30.46 \pm 0.48$ & $77.62 \pm 1.05$
& $21.86 \pm 5.85$ & $75.33 \pm 5.59$ & $52.39 \pm 5.86$ & $46.08 \pm 7.39$
& $50.63 \pm 15.14$ & $72.95 \pm 5.04$ \\
CodeGen$_{\texttt{+GPT-5}}$ 
& $27.63 \pm 0.87$ & $33.21 \pm 0.61$ & $78.46 \pm 1.00$
& $21.09 \pm 2.21$ & $76.04 \pm 3.34$ & $49.30 \pm 2.06$ & $49.18 \pm 2.90$
& $54.15 \pm 11.38$ & $74.43 \pm 3.00$ \\
CodeGen$_{\texttt{+Qwen3-Max}}$ 
& $27.55 \pm 0.47$ & $33.41 \pm 0.28$ & $78.56 \pm 0.99$
& $21.31 \pm 2.85$ & $76.48 \pm 3.52$ & $49.78 \pm 2.59$ & $50.72 \pm 4.25$
& $53.82 \pm 11.77$ & $70.46 \pm 9.84$ \\
\midrule
medR$_{\texttt{+GPT-OSS-20B}}$ 
& $27.45 \pm 0.93$ & $33.67 \pm 0.59$ & $78.46 \pm 1.07$
& $24.10 \pm 4.40$ & $\mathbf{77.06 \pm 4.29}$ & $\mathbf{57.13 \pm 3.30}$ & $\mathbf{50.83 \pm 3.65}$
& $53.82 \pm 11.77$ & $70.46 \pm 9.84$ \\
\textbf{medR$_{\texttt{+GPT-4}}$}
& $\mathbf{27.70 \pm 0.41}$ & $\mathbf{33.75 \pm 0.22}$ & $\mathbf{78.47 \pm 0.95}$
& $\mathbf{24.20 \pm 4.23}$ & $76.67 \pm 3.95$ & $56.92 \pm 3.79$ & $50.41 \pm 3.59$ & $\mathbf{66.44 \pm 8.42}$ & $\mathbf{76.92 \pm 4.30}$ \\
\bottomrule
\end{tabular}%
}
\end{table*}

\subsection{RQ1: Should we use mortality as fitness or reward?}
A prevalent assumption in clinical RL is that agents trained solely on sparse mortality outcomes will asymptotically converge to optimal behavior via automatic temporal credit assignment. However, this premise rarely holds true in the offline, high-noise, stochastic regime of critical care, and can lead to ``superstitious learning.'' In ICU a robust patient may survive despite suboptimal actions, while a high-severity patient may die despite optimal care. As shown in Table\ref{tab:agreement} and Table\ref{tab:action_agreement}, the policy trained under ORM performs significantly worse than the other baseline and our method  on the agreement among survivors which reveals that the former struggles to distinguish between policy quality and patient stochasticity. Simply maximizing survival may move the patient away from wellbeing. In Table~\ref{tab:wis}, the ORM reward also results in the lowest return in both Sepsis and Ventilation tasks. Furthermore, utilizing outcome that happens later can introduce hindsight bias, the training signal should only depend on information available at decision time. We avoid this by aligning the reward with the clinician's reality, optimizing immediate physiology as a proxy of survival. We separate mortality from reward and instead use the hindsight ground truth as fitness to evaluate whether the reward function induces high-quality policies. This confirms that explicitly rewarding the process of stabilization provides a more reliable learning signal than the noisy outcome.

\begin{figure}[htbp]
    \centering
    \includegraphics[width=\linewidth]{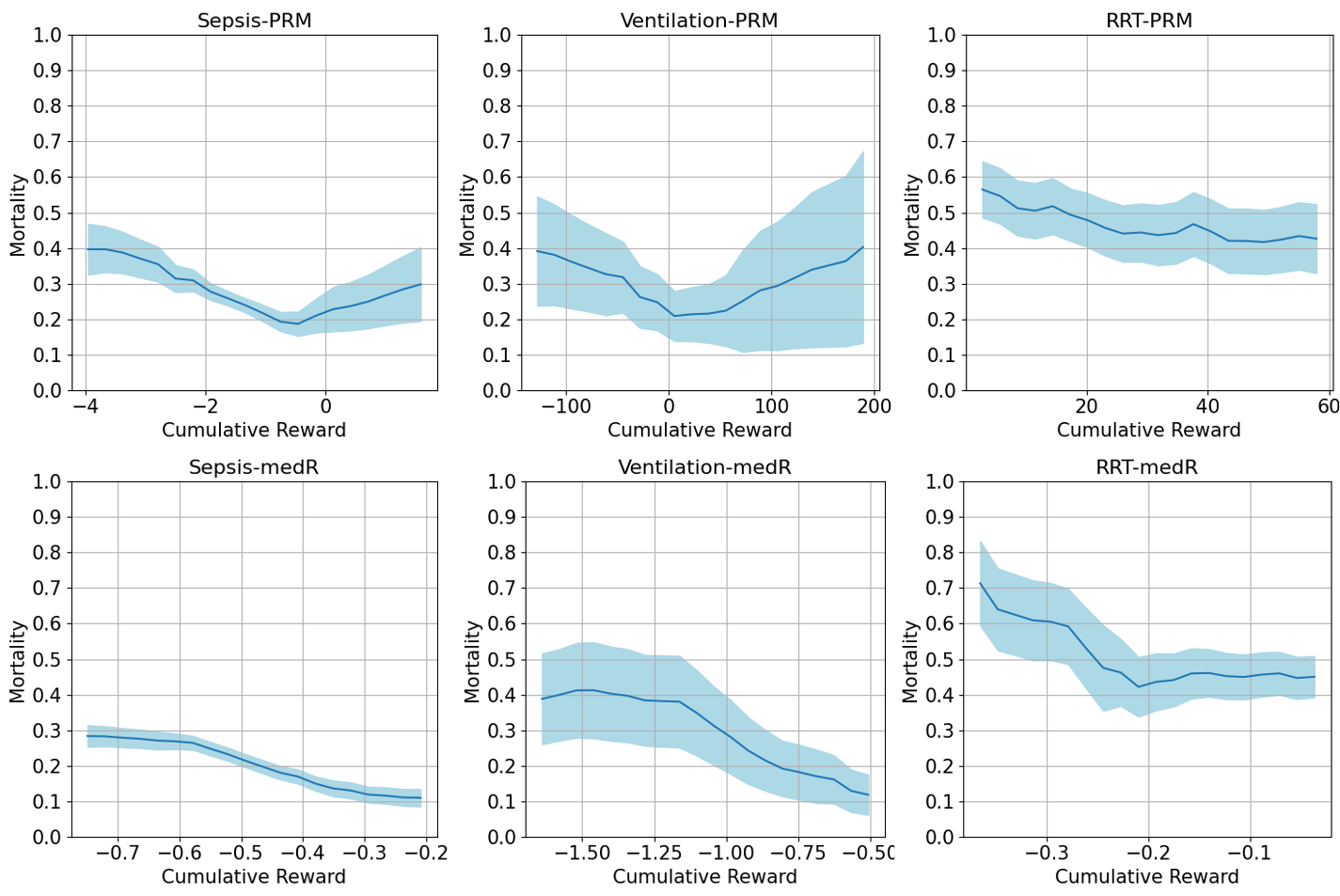} 
    \caption{The relationship between cumulative process reward and patient mortality probability for different tasks. A valid process should demonstrate clear downward trend with mortality, baselines that include mortality as part of the reward design are excluded for fair comparison.}
    \label{fig:mortality_analysis}
\end{figure}

\subsection{RQ2: Can we use naive shaping of outcome reward model?}
To facilitate policy training previous literature actively utilizes clinically-guided dense rewards that include both terminal and intermediate rewards to form an OPRM\cite{raghu2017continuous, lu2024reinforcement}, which can be approximated as a naive reward shaping in a broader sense. As shown in Figure\ref{fig:mortality_analysis}, the standalone process reward is not well correlated with mortality, which is dangerous as a secondary outcome to guide policy learning. In Figure\ref{fig:j_bar}, naive shaping candidates often dominate on the $J_{surv}$ axis but collapse on the $J_{comp}$ axis. This indicates a reward hacking phenomenon where the agent learns to accumulate stability bonuses by maintaining the patient in a safe but non-improving state, rather than efficiently steering them toward discharge. This is proved by Table\ref{tab:wis} where it achieves low WIS in both Ventilation and RRT tasks. In contrast, medR maintains high performance across all three objectives. Our method avoids this failure mode through the time-dependent decay factor $\delta(t)$ explicitly penalizes loitering. As $\Phi(s,t)$ decays towards zero over time, the agent is mathematically forced to achieve stabilization targets early in the episode to maximize the return. By utilizing the potential difference $\gamma \Phi_{t+1} - \Phi_t$, our method mathematically ensures that rewards are only generated by state improvement rather than state maintenance, effectively preventing the infinite horizon hacking observed in the baseline.

\begin{table}[h]
\centering
\caption{Comparison of WIS scores with bootstrapped confidence intervals (95\%CI). }
\label{tab:wis}

\begin{tabular}{lccc}
\toprule
\textbf{Method} & \textbf{Sepsis} & \textbf{Ventilation} & \textbf{RRT} \\
\midrule

Clinician (Baseline) 
& \makecell{$-54.50$ \\ \footnotesize{$(-55.49, -53.51)$}} 
& \makecell{$-56.94$ \\ \footnotesize{$(-57.29, -56.59)$}} 
& \makecell{$-87.61$ \\ \footnotesize{$(-92.51, -82.83)$}} \\
\midrule

ORM 
& \makecell{$-45.34$ \\ \footnotesize{$(-46.53, -44.02)$}} 
& \makecell{$-31.06$ \\ \footnotesize{$(-32.90, -29.20)$}} 
& \makecell{$-34.95$ \\ \footnotesize{$(-51.31, -17.35)$}} \\

PRM 
& \makecell{$-13.12$ \\ \footnotesize{$(-14.03, -12.24)$}} 
& \makecell{$-24.97$ \\ \footnotesize{$(-26.47, -23.59)$}} 
& \makecell{$-38.97$ \\ \footnotesize{$(-51.82, -26.13)$}} \\

OPRM 
& \makecell{$-17.73$ \\ \footnotesize{$(-18.93, -16.63)$}} 
& \makecell{$-24.30$ \\ \footnotesize{$(-25.59, -23.20)$}} 
& \makecell{$-35.57$ \\ \footnotesize{$(-51.75, -19.90)$}} \\
\midrule

LLMR$_{\texttt{+GPT-OSS-20B}}$ 
& \makecell{$-17.86$ \\ \footnotesize{$(-18.88, -17.11)$}} 
& \makecell{$-20.72$ \\ \footnotesize{$(-22.89, -18.82)$}} 
& \makecell{$-38.98$ \\ \footnotesize{$(-52.63, -25.34)$}} \\

CodeGen$_{\texttt{+GPT-5}}$
& \makecell{$-28.68$ \\ \footnotesize{$(-31.08, -25.75)$}} 
& \makecell{$-21.97$ \\ \footnotesize{$(-24.73, -19.34)$}} 
& \makecell{$-39.90$ \\ \footnotesize{$(-53.42, -26.37)$}} \\

CodeGen$_{\texttt{+Qwen3-Max}}$ 
& \makecell{$-15.17$ \\ \footnotesize{$(-15.82, -14.52)$}} 
& \makecell{$-24.06$ \\ \footnotesize{$(-26.98, -22.04)$}} 
& \makecell{$-41.15$ \\ \footnotesize{$(-53.77, -27.98)$}} \\
\midrule

\textbf{medR$_{\texttt{+GPT-OSS-20B}}$} 
& \makecell{$-12.44$ \\ \footnotesize{$(-13.23, -11.64)$}} 
& \makecell{$\mathbf{-18.42}$ \\ \footnotesize{$(-20.13, -16.71)$}} 
& \makecell{$-39.49$ \\ \footnotesize{$(-52.37, -26.61)$}} \\

\textbf{medR$_{\texttt{+GPT-4}}$} 
& \makecell{$\mathbf{-12.37}$ \\ \footnotesize{$(-13.12, -11.63)$}} 
& \makecell{$-18.96$ \\ \footnotesize{$(-20.77, -17.13)$}}
& \makecell{$\mathbf{-34.73}$ \\ \footnotesize{$(-52.70, -16.53)$}} \\

\bottomrule
\end{tabular}%
\end{table}
\subsection{RQ3: Do we need to include action as part of reward?}
Previous reward function design rarely includes action as part of reward signals. Without action costs, clinical agents are prone to masking physiological deterioration with high-intensity interventions. While this improves the immediate state representation, it often induces long-term toxicity. In Figure\ref{fig:sepsis_agreement},\ref{fig:ventilation_agreement},\ref{fig:rrt_agreement}, policies trained by heuristic reward tend to give higher dosages compared to medR. By strictly penalizing the intervention magnitude, our full Tri-Drive objective forces the agent to solve a Lagrangian relaxation of the clinical problem: maximize stability subject to minimum necessary harm. Our policy achieves the highest $J_{comp}$ in all three tasks (Figure\ref{fig:j_bar}) while achieving the highest WIS(Figure\ref{tab:wis}).
\subsection{RQ4: Do LLMs qualify as reward functions?}
\label{subsec:rq4_llm_as_reward}
LLMs are rapidly developing and demonstrating excellent reasoning capabilities. Previous research has explored the possibility of directly using LLMs as planners for DTRs\cite{luo2025large} and concludes that they exhibit notable limitations. Therefore, a natural question is whether they can instead directly serve as the reward signal instead and output a scalar quality score. Our experiments indicate that LLMs are ill-suited for this direct role in the clinical offline RL setting. While an LLM can accurately identify immediate physiological abnormalities, it fails to perform temporal credit assignment. The standard LLMR baseline consistently demonstrates lower agreement with clinical policies compared to medR across all tasks, particularly in complex interventions like RRT dosing where it trails by over 15 percentage points ($50.63\%$ vs. $66.44\%$). Furthermore, LLMR exhibits notably higher variance, suggesting that generic LLM-generated rewards lack the stability and precision of our potential-based approach. While they naturally demonstrate comparably higher $J_{surv}$ since we force the sum of reward to be a direct indicator of mortality, they systematically collapse on $J_{conf}$ and $J_{comp}$, suggesting it generates survival-at-all-costs policies that ignore clinical efficiency and data uncertainty.
By prompting the LLM with the full trajectory and outcome, the hindsight bias appears again and the model will tend to hallucinate positive or negative rewards along the whole trajectory. This confirms that while LLMs possess domain knowledge, they function as semantic pattern matchers rather than value function approximators.
\begin{table}[t]
\centering
\caption{Ablation study. We evaluate the contribution of feature selection, the potential term, and Tri-Drive logic and prove medR demonstrates the most consistent performance across tasks.}
\label{tab:ablation_study}
\begin{tabular}{l c c c}
\toprule
\textbf{Ablation Variant} & $J_{\text{surv}}$ & $J_{\text{conf}}$ & $J_{\text{comp}}$ \\
\midrule

\multicolumn{4}{c}{\textit{Sepsis}} \\
\midrule
w/o Feature Selection & 0.61 & 0.47 & 0.55 \\
w/o Potential & 0.44 & 0.50 & 0.65 \\
w/o Tri-Drive Logic & 0.56 & 0.54 & 0.56 \\
\textbf{medR} & \textbf{0.68} & \textbf{0.57} & \textbf{0.69} \\

\midrule
\multicolumn{4}{c}{\textit{Ventilation}} \\
\midrule
w/o Feature Selection & 0.65 & 0.77 & 0.55 \\
w/o Potential & 0.41 & 0.24 & 0.49 \\
w/o Tri-Drive Logic & 0.60 & 0.74 & 0.59 \\
\textbf{medR} & \textbf{0.69} & \textbf{0.82} & \textbf{0.61} \\

\midrule
\multicolumn{4}{c}{\textit{RRT}} \\
\midrule
w/o Feature Selection & 0.50 & 0.46 & 0.42 \\
w/o Potential & 0.54 & 0.32 & 0.48 \\
w/o Tri-Drive Logic & 0.52 & \textbf{0.70} & 0.58 \\
\textbf{medR} & \textbf{0.60} & 0.66 & \textbf{0.60} \\

\bottomrule
\end{tabular}%
\end{table}

\subsection{RQ5: Can LLMs directly generate reward code?}
\label{subsec:rq5_direct_generation}
If LLMs cannot directly act as the reward model, can they write the codes for a high-quality reward function directly? We evaluated the capability of multiple LLMs to generate reward functions from open-ended clinical prompts. The results were consistently suboptimal. Table~\ref{tab:agreement} reveals a critical limitation in direct code generation approaches (CodeGen) where they lack the structural robustness required for complex interventions like RRT. They also yield lower WIS(Figure\ref{tab:wis} where our potential-based medR framework outperforms the strongest CodeGen baseline. These results underscore that raw LLM reasoning capability alone is insufficient; without the control-theoretic scaffolding provided by medR, even superior models struggle to synthesize reward functions that capture the nuance of rare clinical events. Moreover, CodeGen variants exhibit significant volatility; CodeGen$_{\texttt{+GPT-5}}$(CG(G)) achieves a high $J_{surv}$ of $0.73$ in Sepsis but drops to $0.50$ in Competence, indicating the agent maximizes the outcome signal via inefficient or unsafe intervention loads. In contrast, medR maintains a robust fitness profile across all three dimensions, validating that our potential function effectively balances survival objectives with necessary safety and efficiency constraints.

\begin{figure}[htbp]
    \centering
    \includegraphics[width=\linewidth]{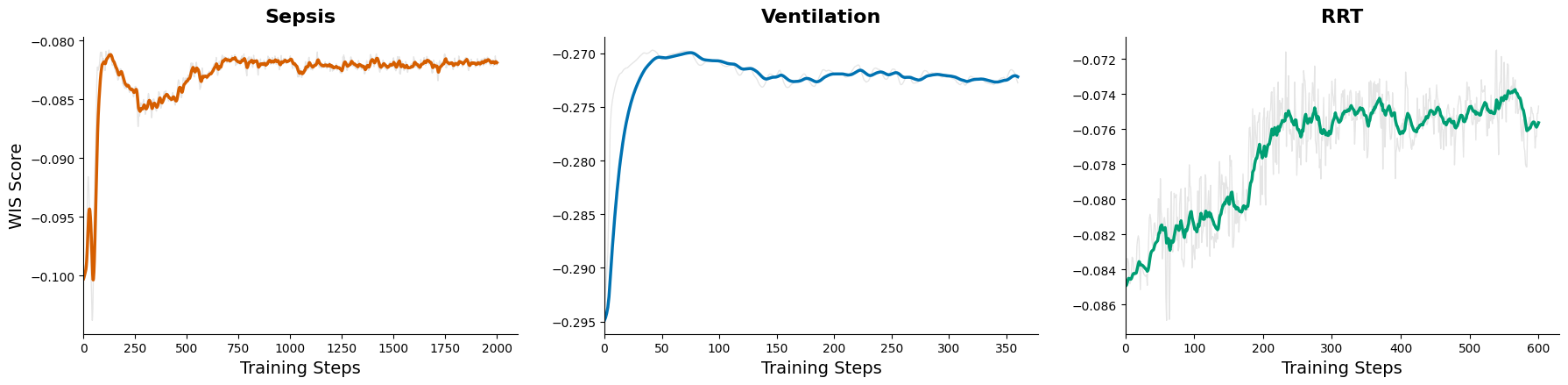} 
    \caption{WIS training plot.}
    \label{fig:wis_train}
\end{figure}

\subsection{Ablation Study}
\label{subsec:ablation}
To rigorously evaluate the individual contributions of our framework's components, we conducted an ablation study by systematically removing (1) the LLM-driven Feature Selection, (2) the Potential-based reward formulation, and (3) the Tri-Drive structured logic. Table~\ref{tab:ablation_study} summarizes the impact of these components on policy fitness ($J_{\text{surv}}$, $J_{\text{conf}}$, $J_{\text{comp}}$) across all three clinical domains. The most critical component of our framework is the potential difference formulation ($\gamma \Phi_{t+1} - \Phi_t$). Removing this term results in a catastrophic collapse in performance, particularly in the Mechanical Ventilation domain, where $J_{\text{conf}}$ drops precipitously from $0.82$ to $0.24$. This confirms that without the telescoping sum property, the reward signal fails to distinguish between state maintenance and state improvement, leading to policies that are highly uncertain and poorly correlated with patient survival. The ablation of automated feature selection highlights the necessity of filtering the high-dimensional clinical state space. This effect is most pronounced in the RRT task, where removing feature selection degrades survival fitness ($J_{\text{surv}}$) from $0.60$ to $0.40$. In complex domains like renal replacement, irrelevant features introduce significant noise; our LLM-driven selection effectively isolates the causal variables required for stable learning. Finally, we evaluate the structural scaffolding of the Survival, Confidence, and Competence drives. While this variant occasionally achieves high scores in isolated metrics, for example, achieving the highest $J_{\text{conf}}$ of $0.70$ in RRT, it fails to maintain the Pareto balance required for effective care. For instance, in RRT, the increase in confidence comes at the cost of survival fitness. The full medR framework forces a necessary trade-off, optimizing for policies that are simultaneously safe, effective, and efficient.

\section{Conclusion}
In this work, we introduced medR, a novel framework for automating reward engineering in offline clinical RL. By leveraging LLMs to define Tri-Drive potential functions, medR successfully addressed policy optimization across three distinct clinical tasks. The substantial performance gains observed in both traditional OPE metrics and our proposed Tri-Drive fitness scores validate the efficacy of utilizing LLMs for offline reward shaping. Given the clinical necessity for reward functions that are personalized to specific diseases and patient cohorts, medR establishes a scalable foundation for applying reinforcement learning to broader medical challenges and more complex intervention spaces.
\section{Impact Statement}
This work addresses one of the most significant barriers to the deployment of RL in healthcare: the difficulty of defining mathematically precise reward functions that align with complex clinical objectives. By automating this process through medR, we lower the technical barrier for clinicians and researchers to develop custom treatment policies for diverse patient cohorts and rare pathologies, potentially democratizing access to precision medicine tools. Methodologically, our Tri-Drive framework contributes to AI safety by explicitly embedding uncertainty quantification (Confidence) and dosage constraints (Competence) into the reward structure, mitigating the risks of aggressive policy behavior common in offline RL. However, we acknowledge that LLM-generated rewards may inherit biases present in training data or prompt phrasing. While our results demonstrate high correlation with clinical outcomes, rigorous prospective validation and human-in-the-loop oversight remain essential prerequisites before any clinical deployment.
\bibliography{bibfile}
\bibliographystyle{unsrt}

\newpage
\appendix
\onecolumn
\section{Prompts}

\begin{lstlisting}[style=promptstyle,caption={Prompt for Feature Selection}]
You are an expert Clinical Data Scientist and Intensivist specializing in Offline Reinforcement Learning.
TASK: Select the top 7 **Critical State Features** based on the statistical analysis provided below.
SELECTION CRITERIA:
- Select features that are strong indicators of patient condition that are highly correlated to the disease and patient outcome.
- Exclude features that are direct proxies of interventions to prevent reward hacking (high correlation with actions).
- EXCLUDE all demographic and baseline features: age, gender, elixhauser vanwalraven, weight, readmission, step id. These are static patient characteristics, not dynamic state features. Focus ONLY on dynamic physiological and clinical state features that change over time.
- Prefer features with low missingness and strong predictive power for outcomes.
OUTPUT FORMAT: IMPORTANT: Output ONLY valid JSON. Do not include any preamble, thinking process, explanations, or text before or after the JSON.
Return a JSON object with keys: critical state features, rank the features by their importance for reward modeling. For each feature, provide a 1-sentence rationale explaining its selection based on the provided statistics.
Format your response as structured JSON with the following schema:
{"critical_state_features": [{
"feature_name": "...",
"rationale": "..."
}]}
Your response must start with " and end with ". Do not include any other text.
DATASET SUMMARY:
Total records: 
Total patients: 
Average records per patient: 
Mortality Rates:
FEATURE STATISTICS:
Feature: 
  Count: 
  Mean: 
  Range: 
  Median:  IQR: 
CORRELATIONS WITH OUTCOMES:
Outcome: mortality
  - feature\_name: r= (p=, n=) 
ACTION-FEATURE CORRELATIONS (to identify action-dependent features):
Action: action1
  - feature\_name: r= (p=, n=)
\end{lstlisting}

\begin{lstlisting}[style=promptstyle,caption={Prompt for Reward Generation}]
You are an expert in clinical data science specializing in Offline Reinforcement Learning. Your task is to **DESIGN** a reward function based on a Potential Function ($\Phi$) for an RL agent to learn the optimal policy for sepsis treatment(['vaso_5quantile', 'iv_fluid_5quantile']).
Reward is difference-based with discount factor: R(s, a, t, s', a', t') = \gamma \Phi(s', t') - \Phi(s, t) - \lambda C(a) 
\Phi(s, a, t) is the potential function C(a) is the competence cost, s=(o, \Delta t) is the state, observations (o) are the critical features that we selected imputed with forward fill, and \Delta t is the time gap from current time step to the last step where there is a real record. a is the action dictionary with keys ['vaso_5quantile', 'iv_fluid_5quantile'] for treatment. t is the absolute time stamp indicated using time step (integers starting from 0).
You must **invent** the mathematical logic for three components based on the data below.
## PART 1: MATHEMATICAL DESIGN (You must think first)
The potential function consists of three components, survival, confidence, and competence. For each component, explicitly state the formula you will use.
1. **Survival:** Choose a function that scores physiology (0-1).
- *Design Constraint:* Use both clinical knowledge and feature statistics to define the healthy range. Higher potential when the physiological states move towards healthier range. "Goldilocks" features need Bell curves. Directional features need decay curves.
2. **Confidence:** Choose a decay function for $\Delta t$.
- *Design Constraint:* Trust must drop as time gap increases.
3. **Competence:** Choose a penalty function for actions.
- *Design Constraint:* Higher dose = Lower potential.
Also design a time decay funtion for strategy annealing, which decays the potential toward zero as the episode progresses. This is different from the confidence component.
### PART 2: PYTHON IMPLEMENTATION RULES (CRITICAL)
1. **NO DUMMY FUNCTIONS.** Every function you call must be defined in this script. Do not write `_compute_survival()` unless you write the code for it immediately above.
2. **NO EXTERNAL CONFIG.** All dictionaries (Targets, Sigmas, Taus) must be defined as variables inside the script.
3. **BALANCED COPONENTS.** All component scores must be balanced and normalized, no single component overpowers others.
### INPUT FORMAT:
- state={'feature1': (value1, delta1),
        'feature2': (value2, delta2),
        ...
        }
- Values for the features are normalized with min max values to the range of 0 to 1
- t and \Delta t are integers in the unit of hours
- action={'action1': value1,
        'action2': value2,
        ...
        }
- Values for the actions are in levels represented with integers (e.g. 0, 1, 2, 3 from low to high)
### OUTPUT FORMAT
Provide a single Python code block that defines the potential function and the reward function.
**CODE TEMPLATE (You MUST follow this structure):**
```python
import numpy as np
import math
# --- 1. PARAMETER DEFINITIONS (HARD-CODED FROM DATA) ---
# Design your targets here based on the JSON summary
SURVIVAL_CONFIG = {
    # ... FILL ALL FEATURES ...
}
CONFIDENCE_TAU = {
    # ... FILL ALL FEATURES ...
}
# --- 2. MATH HELPER FUNCTIONS (NO PLACEHOLDERS) ---
def time_decay(t):
    # WRITE THE TIME DECAY FUNCTION FOR STRATEGIC ANNEALING
    return ...
def compute_survival_score(val, params):
    # WRITE THE ACTUAL MATH HERE
    return ...
def compute_confidence_weight(delta_t, tau):
    # WRITE THE ACTUAL EXPONENTIAL DECAY MATH
    return ...
def compute_competence_cost(action):
    # WRITE THE DOSE PENALTY MATH
    return ...
# --- 3. MAIN POTENTIAL FUNCTION ---
def potential_function(state, action, t):
    # Iterate through features, calculate scores, return weighted sum
    # base_potential = ...
    decay_factor = time_decay(t)
    return base_potential * decay_factor
# --- 4. REWARD FUNCTION ---
def reward_function(s, a, t, s_next, a_next, t_next, gamma=0.99):
    return gamma * potential_function(s_next, a_next, t_next) - potential_function(s, a, t) - compute_competence_cost(a)
### DATASET SUMMARY:
{
  "total_records":
  "total_patients": 
  "mortality_rate": 
  "features_analyzed": 
}
### CRITICAL STATE FEATURES:
1. **feature1**
   - Missing: 
   - Mean:
   - Std: 
   - Range: 
   - Median:
   - Q25: Q75: 
   - Correlation with mortality: 
**Use these statistics along with clinical knowledge to define target range**
**Target should reflect homeostasis: values in normal/healthy physiological range**
**CRITICAL REQUIREMENTS FOR EXECUTABLE CODE:**
- Include ALL imports needed: `import numpy as np`, `import pandas as pd`, `import math` (if needed)
- Define ALL helper functions you use (e.g., distance functions, penalty functions, cost functions)
- Define ALL constants/variables used (e.g., `FEATURE_WEIGHTS`, `FEATURE_TARGETS`, `TAU`, `ACTION_COST_SCALE`)
- Code must be directly runnable without any undefined variables or functions
- The reward function MUST call potential_function and compute the difference
- ALL imports, helper functions, and constants MUST be defined in your code
- Code must be directly executable - no undefined variables or functions
\end{lstlisting}

\section{Reward Examples}
\subsection{Sample of Reward Function for Sepsis Management}
\begin{lstlisting}[style=codestyle, language=Python]
import numpy as np
import math

# --- 1. PARAMETER DEFINITIONS (HARD-CODED FROM DATA) ---

# Normalization assumptions (min-max from raw data):
# sofa_24hours: 0-23
# baseexcess: -25 to 0
# lactate: 0.3 to 29
# urineoutput: -3000 to 4400
# mbp: 20 to 200
# heartrate: 23 to 212

# Survival config combines best from champions:
# - Use bell curves for "Goldilocks" features with targets near clinical medians/normals
# - Use directional decay for directional features (sofa_24hours, urineoutput)
# - Use decay_low or decay_high style from Champion 3 and 4 for directional features
SURVIVAL_CONFIG = {
    'sofa_24hours': {
        'type': 'directional_decay',
        'direction': 'low',  # lower sofa better
        'min': 0.0,
        'max': 1.0,
        # decay steepness chosen so score ~0.1 at val=1 (max sofa)
        'k': 2.3
    },
    'baseexcess': {
        'type': 'bell',
        'target': normalize_raw(-2.0, -25.0, 0.0),  # ~0.92
        'sigma': 0.1  # moderate spread from Champion 1 and 4
    },
    'lactate': {
        'type': 'decay_lower',  # penalize values above target
        'target': normalize_raw(1.6, 0.3, 29.0),  # ~0.045
        'sigma': 0.05  # for decay rate calculation
    },
    'urineoutput': {
        'type': 'directional_decay',
        'direction': 'high',  # higher urine output better
        'threshold': normalize_raw(40.0, -3000.0, 4400.0),  # ~0.414
        'k': 5.0  # steep decay below threshold
    },
    'mbp': {
        'type': 'bell',
        'target': normalize_raw(75.0, 20.0, 200.0),  # ~0.31
        'sigma': 0.1
    },
    'heartrate': {
        'type': 'bell',
        'target': normalize_raw(85.0, 23.0, 212.0),  # ~0.33
        'sigma': 0.1
    },
}

# Confidence decay taus (hours) from Champion 2 (best confidence anchor)
# Uniform tau = 6 hours for all features to reflect faster confidence decay
CONFIDENCE_TAU = {
    'sofa_24hours': 6.0,
    'baseexcess': 6.0,
    'lactate': 6.0,
    'urineoutput': 6.0,
    'mbp': 6.0,
    'heartrate': 6.0,
}

# Action penalty parameters from Champion 3 (best competence anchor)
MAX_DOSE_LEVEL = 4
ACTION_COST_SCALE = 0.25  # penalty scale per action component

# Weights for components balanced as in Knee Point Champion (equal weighting)
COMPONENT_WEIGHTS = {
    'survival': 1/3,
    'confidence': 1/3,
    'competence': 1/3,
}

# --- 2. MATH HELPER FUNCTIONS (NO PLACEHOLDERS) ---

def time_decay(t):
    """
    Strategic annealing time decay function.
    Exponential decay with half-life 48 hours.
    """
    half_life = 48.0
    decay = 0.5 ** (t / half_life)
    return decay

def compute_survival_score(val, params):
    """
    Compute survival score for a single normalized feature value in [0,1].
    Types:
    - 'bell': Gaussian bell curve centered at target with sigma.
    - 'decay_lower': exponential decay if val > target.
    - 'directional_decay': exponential decay from threshold or zero depending on direction.
    Returns score in [0,1].
    """
    if val is None:
        # Missing value: neutral survival score 0.5
        return 0.5

    ftype = params['type']

    if ftype == 'bell':
        target = params['target']
        sigma = params['sigma']
        if sigma <= 0:
            return 0.0
        diff = val - target
        score = math.exp(-0.5 * (diff / sigma) ** 2)
        return score

    elif ftype == 'decay_lower':
        # Penalize values above target exponentially
        target = params['target']
        sigma = params['sigma']
        if val <= target:
            return 1.0
        else:
            # decay rate so score ~0.5 at val=target+sigma
            decay_rate = math.log(2) / sigma
            score = math.exp(-decay_rate * (val - target))
            return score

    elif ftype == 'directional_decay':
        direction = params.get('direction', None)
        if direction == 'low':
            # best at 0, decay as val increases
            k = params.get('k', 2.3)
            score = math.exp(-k * val)
            return score
        elif direction == 'high':
            threshold = params.get('threshold', 0.5)
            k = params.get('k', 5.0)
            if val >= threshold:
                return 1.0
            else:
                diff = (threshold - val) / threshold if threshold > 0 else 1.0
                score = math.exp(-k * diff)
                return score
        else:
            # Unknown direction, neutral
            return 0.5
    else:
        # Unknown type, neutral
        return 0.5

def compute_confidence_weight(delta_t, tau):
    """
    Exponential decay of confidence with delta_t (hours).
    Returns weight in (0,1].
    """
    if delta_t is None or delta_t < 0:
        delta_t = 0
    return math.exp(-delta_t / tau)

def compute_competence_cost(action):
    """
    Compute competence penalty from actions.
    Penalize higher doses linearly scaled.
    Sum penalties for vaso and iv fluid, capped at 1.
    Returns penalty in [0,1].
    """
    vaso_level = action.get('vaso_5quantile', 0)
    iv_level = action.get('iv_fluid_5quantile', 0)
    vaso_norm = vaso_level / MAX_DOSE_LEVEL
    iv_norm = iv_level / MAX_DOSE_LEVEL
    total_penalty = vaso_norm * ACTION_COST_SCALE + iv_norm * ACTION_COST_SCALE
    total_penalty = min(total_penalty, 1.0)
    return total_penalty

# --- 3. MAIN POTENTIAL FUNCTION ---

def potential_function(state, t):
    """
    Compute potential function Phi(s,a,t) as balanced weighted sum of survival,
    confidence, and competence components with strategic time decay.
    state: dict of feature: (value_normalized, delta_t)
    action: dict with keys ['vaso_5quantile', 'iv_fluid_5quantile']
    t: absolute time step (int)
    """
    survival_scores = []
    confidence_weights = []

    for feat in SURVIVAL_CONFIG.keys():
        val, delta_t = state.get(feat, (None, None))
        surv_score = compute_survival_score(val, SURVIVAL_CONFIG[feat])
        tau = CONFIDENCE_TAU.get(feat, 6.0)
        conf_weight = compute_confidence_weight(delta_t, tau) if delta_t is not None else 0.0

        survival_scores.append(surv_score * conf_weight)
        confidence_weights.append(conf_weight)

    # Aggregate survival component: weighted average survival weighted by confidence
    sum_confidence = sum(confidence_weights)
    if sum_confidence > 0:
        survival_component = sum(survival_scores) / sum_confidence
    else:
        survival_component = 0.5  # neutral if no confidence

    # Confidence component: average confidence weight normalized [0,1]
    if len(confidence_weights) > 0:
        confidence_component = sum(confidence_weights) / len(confidence_weights)
    else:
        confidence_component = 0.0

    # Combine components equally weighted (balanced)
    base_potential = (
        COMPONENT_WEIGHTS['survival'] * survival_component +
        COMPONENT_WEIGHTS['confidence'] * confidence_component +
        COMPONENT_WEIGHTS['competence'] * competence_component
    )

    # Clamp base potential to [0,1]
    base_potential = max(0.0, min(1.0, base_potential))

    # Apply strategic time decay
    decay_factor = time_decay(t)

    return base_potential * decay_factor

# --- 4. REWARD FUNCTION ---

def reward_function(s, t, s_next, t_next, a, gamma=0.99):
    """
    Reward is difference-based with discount factor:
    R(s,t,s',t') = gamma * Phi(s',t') - Phi(s,t)
    """
    return gamma * potential_function(s_next, t_next) - potential_function(s, t) - compute_competence_cost(a)
\end{lstlisting}

\subsection{Sample of Reward Function for Ventilation Setting}
\begin{lstlisting}[style=codestyle, language=Python]
import numpy as np
import math

# --- 1. PARAMETER DEFINITIONS (HARD-CODED FROM DATA) ---

# Feature names: ['mbp', 'sofa_24hours', 'lactate', 'RASS', 'pH', 'spo2', 'paco2']
# All features normalized 0-1, so targets and sigmas are also normalized accordingly.

# We define targets and sigmas for bell curves ("Goldilocks" features)
# and directional decay parameters for directional features.

# Normal physiological ranges and clinical knowledge:
# mbp (mean blood pressure): normal ~ 65-105 mmHg
# sofa_24hours: lower is better (0 best, 24 worst)
# lactate: normal < 2 mmol/L (lower better)
# RASS: target around -2 (sedation level)
# pH: normal 7.35-7.45 (center ~7.4)
# spo2: higher better, normal > 95%
# paco2: normal 35-45 mmHg (lower better)

# Since features are normalized 0-1, we must map clinical targets to normalized scale.
# We assume min-max normalization per feature as per dataset range.

# Raw ranges from dataset (for normalization):
RANGES = {
    'mbp': (1.0, 250.0),
    'sofa_24hours': (0.0, 23.0),
    'lactate': (0.2, 30.0),
    'RASS': (-5.0, 4.0),
    'pH': (6.59, 7.78),
    'spo2': (0.0, 100.0),
    'paco2': (16.9, 157.8),
}

# Targets and sigmas for bell curve features (normalized)
# We choose sigma to cover roughly the interquartile range normalized.

# mbp: target ~ 80 mmHg (normal 65-105), normalized:
mbp_target = normalize(80, *RANGES['mbp'])
mbp_sigma = (normalize(105, *RANGES['mbp']) - normalize(65, *RANGES['mbp'])) / 2  # half IQR approx

# sofa_24hours: lower better, target near 0, directional decay
# lactate: lower better, directional decay
# RASS: target -2, normalized:
rass_target = normalize(-2, *RANGES['RASS'])
rass_sigma = 0.05  # tight around -2

# pH: target 7.4, normalized:
ph_target = normalize(7.4, *RANGES['pH'])
ph_sigma = (normalize(7.45, *RANGES['pH']) - normalize(7.35, *RANGES['pH'])) / 2

# spo2: higher better, directional decay
# paco2: lower better, directional decay

# Define survival config with type and parameters:
# 'bell' for Goldilocks (bell curve)
# 'decay_low' for directional decay where lower is better
# 'decay_high' for directional decay where higher is better

SURVIVAL_CONFIG = {
    'mbp': {
        'type': 'bell',
        'target': mbp_target,
        'sigma': mbp_sigma,
        'weight': 1.0,
    },
    'sofa_24hours': {
        'type': 'decay_low',  # lower better
        'tau': 0.3,
        'weight': 1.0,
    },
    'lactate': {
        'type': 'decay_low',  # lower better
        'tau': 0.2,
        'weight': 1.0,
    },
    'RASS': {
        'type': 'bell',
        'target': rass_target,
        'sigma': rass_sigma,
        'weight': 0.8,
    },
    'pH': {
        'type': 'bell',
        'target': ph_target,
        'sigma': ph_sigma,
        'weight': 1.0,
    },
    'spo2': {
        'type': 'decay_high',  # higher better
        'tau': 0.3,
        'weight': 1.0,
    },
    'paco2': {
        'type': 'decay_low',  # lower better
        'tau': 0.25,
        'weight': 1.0,
    },
}

# Confidence decay taus for delta_t (hours)
# Trust decays exponentially with time gap since last real record
CONFIDENCE_TAU = {
    'mbp': 12,
    'sofa_24hours': 24,
    'lactate': 8,
    'RASS': 6,
    'pH': 12,
    'spo2': 6,
    'paco2': 8,
}

# Action cost scale: penalty per action level (higher level = higher penalty)
# Actions: ['PEEP_level', 'FiO2_level', 'Tidal_level']
# Assume max levels 0-3 (4 levels), scale penalty linearly 0 to 1
ACTION_MAX_LEVEL = {
    'PEEP_level': 3,
    'FiO2_level': 3,
    'Tidal_level': 3,
}
ACTION_COST_WEIGHT = 0.5  # weight of competence penalty in total potential

# --- 2. MATH HELPER FUNCTIONS (NO PLACEHOLDERS) ---

def time_decay(t):
    """
    Strategic annealing decay: potential decays toward zero as episode progresses.
    Use exponential decay with half-life of 48 hours (2 days).
    """
    half_life = 48.0
    decay = 0.5 ** (t / half_life)
    return decay

def compute_survival_score(val, params):
    """
    Compute survival score for one feature normalized val in [0,1].
    params: dict with keys depending on type:
      - 'type': 'bell', 'decay_low', 'decay_high'
      - For bell: 'target', 'sigma'
      - For decay: 'tau'
    Returns score in [0,1].
    """
    if params['type'] == 'bell':
        # Gaussian bell curve centered at target with sigma
        diff = val - params['target']
        score = math.exp(-0.5 * (diff / params['sigma'])**2)
        return score
    elif params['type'] == 'decay_low':
        # Lower values better, exponential decay from 0
        # score = exp(-val / tau)
        score = math.exp(-val / params['tau'])
        return score
    elif params['type'] == 'decay_high':
        # Higher values better, exponential decay from 1
        # score = exp(-(1 - val) / tau)
        score = math.exp(-(1 - val) / params['tau'])
        return score
    else:
        # Unknown type, return neutral 0.5
        return 0.5

def compute_confidence_weight(delta_t, tau):
    """
    Exponential decay of confidence with delta_t (hours).
    """
    return math.exp(-delta_t / tau)

def compute_competence_cost(action):
    """
    Compute penalty cost for actions.
    Higher dose (level) => higher penalty.
    Normalize each action level by max level, sum and scale.
    Return cost in [0,1].
    """
    total_cost = 0.0
    n_actions = len(action)
    for k, v in action.items():
        max_level = ACTION_MAX_LEVEL.get(k, 3)
        norm_level = v / max_level if max_level > 0 else 0
        total_cost += norm_level
    avg_cost = total_cost / n_actions if n_actions > 0 else 0
    # Scale by weight
    return avg_cost * ACTION_COST_WEIGHT

# --- 3. MAIN POTENTIAL FUNCTION ---

def potential_function(state, t):
    """
    Compute potential function Phi(s,t) = survival * confidence
    with strategic time decay.
    state: dict of feature -> (val, delta_t)
    action: dict of action_name -> level
    t: absolute time step (int)
    """
    survival_scores = []
    confidence_weights = []
    weights = []
    for feat, (val, delta_t) in state.items():
        if feat not in SURVIVAL_CONFIG or feat not in CONFIDENCE_TAU:
            continue
        params = SURVIVAL_CONFIG[feat]
        tau_conf = CONFIDENCE_TAU[feat]
        survival = compute_survival_score(val, params)
        confidence = compute_confidence_weight(delta_t, tau_conf)
        w = params['weight']
        survival_scores.append(survival * confidence * w)
        weights.append(w)
    # Normalize survival component by sum weights to balance
    if weights:
        survival_component = sum(survival_scores) / sum(weights)
    else:
        survival_component = 0.0

    base_potential = survival_component
    # Clip potential to [0,1] to avoid negative or >1 values
    base_potential = max(0.0, min(1.0, base_potential))

    decay_factor = time_decay(t)

    return base_potential * decay_factor

# --- 4. REWARD FUNCTION ---

def reward_function(s, t, s_next, t_next, a, gamma=0.99):
    """
    Reward = gamma * Phi(s',t') - Phi(s,t)
    """
    return gamma * potential_function(s_next, t_next) - potential_function(s, t) - compute_competence_cost(a)
\end{lstlisting}

\subsection{Sample of Reward Function for Renal Replacement Therapy}
\begin{lstlisting}[style=codestyle, language=Python]
import numpy as np
import math

# --- 1. PARAMETER DEFINITIONS (HARD-CODED FROM DATA) ---

# Feature targets and healthy ranges (normalized 0-1)
# We define "Goldilocks" features with bell curves centered at median (healthy target)
# Directional features with monotonic decay from healthy boundary

# From data summary and clinical knowledge:
# pH normal range ~7.35-7.45 (normalized approx 0.45-0.55)
# potassium normal ~3.5-5.0 mEq/L (normalized approx 0.25-0.5)
# mbp (mean blood pressure) normal ~65-105 mmHg (normalized approx 0.3-0.6)
# creatinine normal low (kidney function) ~0.6-1.2 mg/dL (normalized approx 0.1-0.2)
# urineoutput normal > 0.5 ml/kg/hr (normalized approx >0.04) but here median 80/2000 normalized ~0.04
# sofa_24hours lower is better (0 is best), so directional decay from 0 upwards
# heartrate normal ~60-100 bpm (normalized approx 0.3-0.5)

# We use normalized values 0-1 as input, so we set targets accordingly:
# For bell curve features: center=median normalized, sigma chosen to cover IQR roughly
# For directional features: exponential decay from healthy boundary

# Weights to balance components (sum to 1)
SURVIVAL_WEIGHTS = {
    'pH': 1.0,
    'potassium': 1.0,
    'mbp': 1.0,
    'creatinine': 1.0,
    'urineoutput': 1.0,
    'sofa_24hours': 1.0,
    'heartrate': 1.0,
}
# Normalize weights so sum = 1
total_w = sum(SURVIVAL_WEIGHTS.values())
for k in SURVIVAL_WEIGHTS:
    SURVIVAL_WEIGHTS[k] /= total_w

# Targets and sigmas for bell curve (Goldilocks) features
# Using median and IQR from data summary, normalized 0-1 scale assumed input
# We approximate sigma as (Q75-Q25)/1.35 (approx std dev for normal dist)
SURVIVAL_CONFIG = {
    # bell curve features: pH, potassium, mbp, creatinine, heartrate
    'pH': {
        'type': 'bell',
        'target': 0.52,  # approx normalized median 7.39 in range 6.72-7.94
        'sigma': 0.03,   # approx (0.55-0.45)/1.35 ~0.07, but tighter for pH
    },
    'potassium': {
        'type': 'bell',
        'target': 0.42,  # median 4.2 in range 2.5-9.03 normalized ~0.42
        'sigma': 0.07,   # (0.46-0.39)/1.35 ~0.05, slightly relaxed
    },
    'mbp': {
        'type': 'bell',
        'target': 0.40,  # median 79.67 in range 20.67-198 normalized ~0.4
        'sigma': 0.07,   # (0.45-0.36)/1.35 ~0.07
    },
    'creatinine': {
        'type': 'bell',
        'target': 0.21,  # median 0.211 in range 0.024-1.97 normalized ~0.21
        'sigma': 0.05,   # (0.28-0.16)/1.35 ~0.09, tighter penalty for kidney
    },
    'heartrate': {
        'type': 'bell',
        'target': 0.43,  # median 91.13 in range 0-207 normalized ~0.44
        'sigma': 0.1,    # (0.5-0.38)/1.35 ~0.09
    },
    # directional decay features:
    # urineoutput: higher is better, so decay if below threshold
    'urineoutput': {
        'type': 'directional',
        'threshold': 0.04,  # normalized median ~80/1975 ~0.04
        'direction': 'above',  # potential decays if below threshold
        'tau': 0.1,
    },
    # sofa_24hours: lower is better, decay if above threshold
    'sofa_24hours': {
        'type': 'directional',
        'threshold': 0.05,  # normalized median 0/14 ~0, so threshold small
        'direction': 'below',  # potential decays if above threshold
        'tau': 10.0,
    },
}

# Confidence decay taus for delta_t (hours)
# Trust decays exponentially with time gap since last real record
# Use feature-specific taus reflecting clinical urgency
CONFIDENCE_TAU = {
    'pH': 12,          # pH changes fast, trust decays quickly
    'potassium': 24,   # potassium changes slower
    'mbp': 6,          # blood pressure changes fast
    'creatinine': 48,  # kidney function slower
    'urineoutput': 24,
    'sofa_24hours': 72,
    'heartrate': 6,
}

# Competence penalty scale for action dose levels
# action 'action' levels: 0 (no dose) to 3 (high dose)
# Penalty increases with dose level
ACTION_COST_SCALE = 0.1  # max penalty 0.3 for dose=3

# --- 2. MATH HELPER FUNCTIONS (NO PLACEHOLDERS) ---

def time_decay(t):
    """
    Strategic annealing decay: potential decays toward zero as episode progresses.
    Use exponential decay with half-life of 100 time steps.
    """
    half_life = 100
    decay = 0.5 ** (t / half_life)
    return decay

def compute_survival_score(val, params):
    """
    Compute survival score for one feature value normalized 0-1.
    For bell curve: Gaussian centered at target with sigma.
    For directional: exponential decay from threshold.
    Returns score in [0,1].
    """
    if params['type'] == 'bell':
        target = params['target']
        sigma = params['sigma']
        # Gaussian bell curve
        diff = val - target
        score = math.exp(-0.5 * (diff / sigma) ** 2)
        return score
    elif params['type'] == 'directional':
        threshold = params['threshold']
        tau = params['tau']
        direction = params['direction']
        if direction == 'above':
            # score = 1 if val >= threshold, else decays exponentially below threshold
            if val >= threshold:
                return 1.0
            else:
                return math.exp(-(threshold - val) / tau)
        elif direction == 'below':
            # score = 1 if val <= threshold, else decays exponentially above threshold
            if val <= threshold:
                return 1.0
            else:
                return math.exp(-(val - threshold) / tau)
        else:
            # Unknown direction, return 0
            return 0.0
    else:
        return 0.0

def compute_confidence_weight(delta_t, tau):
    """
    Exponential decay of confidence with delta_t (hours).
    """
    return math.exp(-delta_t / tau)

def compute_competence_cost(action):
    """
    Penalize higher dose actions.
    action is dict with key 'action' and integer dose level 0-3.
    Returns penalty in [0, 1].
    """
    dose = action.get('action', 0)
    # Clamp dose to 0-3
    dose = max(0, min(3, dose))
    penalty = dose * ACTION_COST_SCALE
    # Normalize penalty to max 1 if dose=10 (not expected here)
    return min(penalty, 1.0)

# --- 3. MAIN POTENTIAL FUNCTION ---

def potential_function(state, t):
    """
    Compute potential function Phi(s,t) as weighted sum of survival * confidence - competence cost,
    then apply strategic time decay.
    """
    survival_sum = 0.0
    confidence_sum = 0.0
    total_weight = 0.0

    for feat, weight in SURVIVAL_WEIGHTS.items():
        if feat not in state:
            # Missing feature, skip
            continue
        val, delta_t = state[feat]
        params = SURVIVAL_CONFIG.get(feat)
        if params is None:
            continue
        survival_score = compute_survival_score(val, params)
        tau = CONFIDENCE_TAU.get(feat, 24)  # default tau=24 if missing
        confidence_weight = compute_confidence_weight(delta_t, tau)
        # Combine survival and confidence multiplicatively
        component_score = survival_score * confidence_weight
        survival_sum += component_score * weight
        total_weight += weight

    # Normalize survival component (should be close to 1)
    if total_weight > 0:
        survival_component = survival_sum / total_weight
    else:
        survival_component = 0.0

    # Apply strategic time decay
    decay_factor = time_decay(t)

    potential = base_potential * decay_factor

    # Clamp potential to [0,1]
    potential = max(0.0, min(1.0, potential))

    return potential

# --- 4. REWARD FUNCTION ---

def reward_function(s, t, s_next, t_next, a, gamma=0.99):
    """
    Reward is difference-based with discount factor:
    R = gamma * Phi(s',t') - Phi(s,t)
    """
    phi_next = potential_function(s_next, t_next)
    phi_curr = potential_function(s, t)
    reward = gamma * phi_next - phi_curr - compute_competence_cost(a)
    return reward
\end{lstlisting}

\section{Mathematical form of Drives in Reward Function}
\label{sec:medR_details}
\subsection{Survival Score $\mathcal{S}$}
Three canonical forms for normalized feature values:
\begin{itemize}
    \item Bell Curve (Goldilocks features):$S_{\text{normal}}(x) = \exp\left(-\frac{1}{2}\left(\frac{x - \mu}{\sigma}\right)^2\right)$
    \item Directional Decay: $S_{\text{directional}}(x) = 
    \begin{cases}
    \exp\left(-\dfrac{x}{\tau}\right) & \text{(lower-better)} \\
    \exp\left(-\dfrac{1-x}{\tau}\right) & \text{(higher-better)}
    \end{cases}$
    \item Asymmetric Decay-Lower (thresholded): $S_{\text{asymetric}}(x) = 
    \begin{cases}
    1 & x \leq \mu \\
    \exp\left(-\dfrac{\ln 2}{\sigma}(x - \mu)\right) & x > \mu
    \end{cases}$
\end{itemize}

\subsection{Confidence Weight $\mathcal{U}$}
$U(\Delta t; \tau) = \exp\left(-\frac{\Delta t}{\tau}\right)$

\subsection{Competence Cost $\mathcal{C}$}
$C(\mathbf{a}) = w_c \sum_{i=1}^{N} \frac{a_i}{a_{\max,i}}$

\section{Theoretical Proof}
\subsection{Breaking Policy Invariance}
\begin{proposition}[Cost Regularization Breaks Invariance and Enforces Efficiency]
\label{prop:non_telescoping}
Let the reward function be defined as $R(s_t, a_t, s_{t+1}) = F(s_t, s_{t+1}) - \lambda \mathcal{C}(a_t)$, where $F(s_t, s_{t+1}) = \gamma \Phi(s_{t+1}) - \Phi(s_t)$ is a potential-based shaping function and $\mathcal{C}(a_t) > 0$ is a strictly positive action cost. 

While the shaping term $F$ forms a telescoping sum that preserves policy invariance (Ng et al., 1999), the inclusion of the non-telescoping cost term $-\lambda \mathcal{C}(a_t)$ renders the cumulative return path-dependent. This modification compels the optimal policy to satisfy two objectives: maximizing the time-discounted physiological recovery $\gamma^T \Phi(s_T)$ (preventing prolonged ICU stays) while simultaneously minimizing the cumulative intervention cost.
\end{proposition}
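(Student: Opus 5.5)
The plan is to write the discounted return of an arbitrary trajectory in closed form and read both claims off that expression. Fix a policy $\pi$ and a trajectory $\tau=(s_0,a_0,s_1,\dots,s_T)$ of finite horizon $T$; the ICU stay terminates at discharge or death. The discounted return is
\begin{equation*}
G(\tau)=\sum_{t=0}^{T-1}\gamma^t\bigl[\gamma\Phi(s_{t+1})-\Phi(s_t)\bigr]-\lambda\sum_{t=0}^{T-1}\gamma^t\mathcal{C}(a_t).
\end{equation*}
The shaping sum telescopes, because $\gamma^{t}\cdot\gamma\Phi(s_{t+1})$ cancels against the $-\gamma^{t+1}\Phi(s_{t+1})$ contributed at step $t+1$. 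This leaves
\begin{equation*}
G(\tau)=\gamma^T\Phi(s_T)-\Phi(s_0)-\lambda\sum_{t=0}^{T-1}\gamma^t\mathcal{C}(a_t).
\end{equation*}
Since $\Phi(s_0)$ does not depend on the policy, maximizing $\mathbb{E}_\pi[G]$ is equivalent to maximizing $\mathbb{E}_\pi\bigl[\gamma^T\Phi(s_T)\bigr]-\lambda\,\mathbb{E}_\pi\bigl[\sum_t\gamma^t\mathcal{C}(a_t)\bigr]$. This is exactly the two-objective form claimed in the statement.

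The first claim is that invariance breaks. I would show that the shaping part depends only on the endpoints $(s_0,s_T,T)$, whereas the cost part depends on the whole action sequence. The cleanest route is a small constructed MDP. Take two action paths that reach the same terminal state $s_T$ at the same time $T$ but use different numbers of nonzero-cost steps. They receive identical shaping contributions, yet their returns differ by $\lambda\sum_t\gamma^t[\mathcal{C}(a_t)-\mathcal{C}(a'_t)]\neq 0$. Hence the return is path-dependent.

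The same construction settles the ranking question. Under the pure PBRS reward $F$ alone, with no base reward, every policy reaching the same $(s_T,T)$ has the same value, so the two paths tie. Once the cost term is added, the ranking strictly prefers the cheaper path. This is the sense in which Ng et al.'s invariance no longer applies: the added term is not of potential form. To make this rigorous I would argue that $\mathcal{C}$ cannot be written as $\Phi'(s)-\gamma\Phi'(s')$ for any potential $\Phi'$. The reason is that $\mathcal{C}(a_t)$ can differ across actions inducing the same transition $(s,s')$, while any potential difference depends only on the states.

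The efficiency claims follow from the closed form. Because $\mathcal{C}>0$, each additional step adds a strictly positive penalty $\lambda\gamma^t\mathcal{C}(a_t)$. For $\Phi\ge 0$, the terminal term $\gamma^T\Phi(s_T)$ shrinks as $T$ grows, and in the time-dependent case it shrinks further through $\delta(T)$. Together these show that, for a fixed target physiology $s_T$, a shorter horizon weakly dominates on both terms, which gives the ``prevents prolonged stays'' reading. For a fixed $(s_T,T)$, the optimum minimizes cumulative discounted cost, which gives the efficiency reading.

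The main obstacle is making the informal "compels the optimal policy to satisfy two objectives" precise. Physiology and cost trade off against each other, so the optimum is a scalarized trade-off rather than simultaneous optimality in both. I would state the result as lexicographic-within-ties plus a Lagrangian scalarization: any strict improvement in one objective at no loss in the other strictly increases value. I would also note that the stochastic, random-$T$ case follows by taking expectations of the pathwise identity, and that the identity assumes $\Phi$ is bounded.
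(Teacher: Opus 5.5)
Your telescoping decomposition $G(\tau)=\gamma^T\Phi(s_T)-\Phi(s_0)-\lambda\sum_{t<T}\gamma^t\mathcal{C}(a_t)$ is exactly the paper's proof, and so is reading path-dependence and the cost tie-break off its two terms. The explicit two-path example, the $\Phi\ge 0$ caveat, and the scalarized-trade-off reading make precise what the paper leaves informal.

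One step, however, fails as stated: the claim that for a fixed target $s_T$ a shorter horizon ``weakly dominates on both terms.'' It holds for the terminal term when $\Phi(s_T)\ge 0$. It does not hold for the cost term, because two paths reaching $s_T$ at different times use different action sequences. For example, a path with $T=1$ and $\mathcal{C}(a_0)=10$ incurs more discounted cost than a path with $T=2$ and $\mathcal{C}(a_0)=\mathcal{C}(a_1)=0.1$. Your remark that each extra step adds a strictly positive penalty only compares a path with its own extension. So the ``prevents prolonged stays'' conclusion should be phrased as the paper phrases it: an incentive from the factor $\gamma^T$ on the terminal term, valid when $\Phi(s_T)>0$. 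Alternatively, restrict it to a path versus its continuation. The full return need not prefer the shorter path.

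A smaller fix concerns your counterexample. Under the hypothesis $\mathcal{C}>0$ there are no zero-cost steps, so ``different numbers of nonzero-cost steps'' is unavailable. Instead, use two actions with different positive costs that induce the same transition. This non-constancy of $\mathcal{C}$ in $a$ is a genuine unstated hypothesis, both for you and for the paper. If $\mathcal{C}\equiv c$, the cost term equals $\lambda c(1-\gamma^T)/(1-\gamma)$, a function of $T$ alone, so the return is not path-dependent at fixed $(s_T,T)$.
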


\begin{proof}
Consider a trajectory $\tau = (s_0, a_0, s_1, \dots, s_T)$. The discounted cumulative return $G(\tau)$ is:
\begin{equation}
    G(\tau) = \sum_{t=0}^{T-1} \gamma^t \left[ \left( \gamma \Phi(s_{t+1}) - \Phi(s_t) \right) - \lambda \mathcal{C}(a_t) \right].
\end{equation}
We separate the potential-based shaping term from the cost term. The shaping term telescopes as follows:
\begin{align}
    \sum_{t=0}^{T-1} \gamma^t (\gamma \Phi(s_{t+1}) - \Phi(s_t)) &= \sum_{t=0}^{T-1} (\gamma^{t+1} \Phi(s_{t+1}) - \gamma^t \Phi(s_t)) \\
    &= \gamma^T \Phi(s_T) - \Phi(s_0).
\end{align}
Substituting this back into $G(\tau)$, we obtain the decomposed objective:
\begin{equation}
    G(\tau) = \underbrace{\left( \gamma^T \Phi(s_T) - \Phi(s_0) \right)}_{\text{Telescoped Potential (Time-Dependent)}} - \underbrace{\lambda \sum_{t=0}^{T-1} \gamma^t \mathcal{C}(a_t)}_{\text{Cumulative Cost (Path-Dependent)}}.
\end{equation}
Here, the first term depends only on the boundary states and the trajectory duration $T$. The factor $\gamma^T$ provides a strategic incentive to reduce $T$ (avoiding "holding" behavior), but remains invariant to the path taken to reach $s_T$ at time $T$. 

However, the second term, $-\lambda \sum \gamma^t \mathcal{C}(a_t)$, strictly \textbf{does not telescope}. It introduces a dependency on the specific sequence of actions $a_{0:T-1}$. Consequently, among policies that achieve the same recovery state $\Phi(s_T)$ in the same time $T$, the objective function breaks invariance to strictly prefer the policy with the minimum cumulative intervention cost.
\end{proof}

\subsection{Equivalence to Lagrangian Relaxation of CMDP}

\begin{proposition}[Equivalence to Lagrangian Relaxation of CMDP]
\label{prop:lagrangian_equivalence}
Maximizing the expected cumulative return of the regularized reward function $R(s, a, s') = (\gamma \Phi(s') - \Phi(s)) - \lambda \mathcal{C}(a)$ is equivalent to maximizing the Lagrangian relaxation of a Constrained Markov Decision Process (CMDP), where the objective is to maximize physiological gain $J_{\Phi}$ subject to a constraint on the cumulative intervention cost $J_{\mathcal{C}} \leq \beta$.
\end{proposition}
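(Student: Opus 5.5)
We establish the equivalence by writing both sides in terms of the same two discounted quantities and then matching the multiplier. For a policy $\pi$ with trajectories $\tau=(s_0,a_0,s_1,\dots)$, define the physiological gain
\[
J_{\Phi}(\pi)=\mathbb{E}_{\tau\sim\pi}\Bigl[\textstyle\sum_{t}\gamma^t\bigl(\gamma\Phi(s_{t+1})-\Phi(s_t)\bigr)\Bigr]
\]
and the cumulative intervention cost
\[
J_{\mathcal{C}}(\pi)=\mathbb{E}_{\tau\sim\pi}\Bigl[\textstyle\sum_t\gamma^t\mathcal{C}(a_t)\Bigr].
\]
By Proposition~\ref{prop:non_telescoping}, the first sum telescopes, so $J_\Phi(\pi)=\mathbb{E}_\pi[\gamma^T\Phi(s_T)]-\mathbb{E}[\Phi(s_0)]$ is a well-defined bounded functional. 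This holds because $\Phi$ is bounded in $[0,1]$ by construction of $\mathcal{S}_f,\mathcal{U}_f,\delta$.

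The constrained problem is the CMDP $\max_\pi J_\Phi(\pi)$ subject to $J_{\mathcal{C}}(\pi)\le\beta$. Its Lagrangian is
\[
\mathcal{L}(\pi,\lambda)=J_\Phi(\pi)-\lambda\bigl(J_{\mathcal{C}}(\pi)-\beta\bigr),\qquad \lambda\ge 0.
\]
By linearity of expectation and of the discounted sum, $J_\Phi(\pi)-\lambda J_{\mathcal C}(\pi)$ equals the expected return under the reward $R(s,a,s')=(\gamma\Phi(s')-\Phi(s))-\lambda\mathcal{C}(a)$. Therefore
\[
\mathcal{L}(\pi,\lambda)=V^\pi_R+\lambda\beta .
\]
Since $\lambda\beta$ does not depend on $\pi$, for every fixed $\lambda\ge0$ we have $\arg\max_\pi V_R^\pi=\arg\max_\pi\mathcal L(\pi,\lambda)$. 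This is the equivalence claimed in Proposition~\ref{prop:lagrangian_equivalence}: maximizing the regularized return is maximizing the Lagrangian relaxation.

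To say more than ``relaxation'' and connect the fixed $\lambda$ to the constraint level $\beta$, we would use CMDP strong duality (Altman). We pass to the occupancy-measure formulation. There $J_\Phi$ and $J_{\mathcal C}$ are linear in the discounted state-action occupancy $d^\pi$, and the feasible occupancies form a convex polytope. This gives a linear program, and under Slater's condition (some policy with $J_{\mathcal C}<\beta$, e.g. the zero-dose policy when $\mathcal C(0)$ is small) strong duality holds. Hence there is a $\lambda^*(\beta)\ge0$ such that an optimal constrained policy maximizes $\mathcal L(\cdot,\lambda^*)$. Conversely, for any $\lambda$, a Lagrangian maximizer $\pi_\lambda$ is optimal for the CMDP with $\beta=J_{\mathcal C}(\pi_\lambda)$. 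That converse needs only an elementary argument: any $\pi$ with $J_{\mathcal C}(\pi)\le J_{\mathcal C}(\pi_\lambda)$ satisfies $J_\Phi(\pi)\le J_\Phi(\pi_\lambda)$.

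The main obstacle is rigor in the duality step with the paper's time-dependent potential $\Phi(s,t)$. Because of $\delta(t)$, the problem is non-stationary, so we would augment the state with $t$ (or with $\Delta t$) to recover a stationary MDP before invoking the occupancy LP. Two further caveats need flagging. First, Lagrangian maximizers at $\lambda^*$ may be non-unique, so only a randomized mixture of them is guaranteed constraint-optimal. Second, in the offline setting the equivalence is about the true MDP objective, not about the batch-constrained BCQ/IQL estimates. We would state the direct ``relaxation'' identity as the proposition's content and the duality converse as the supporting remark.
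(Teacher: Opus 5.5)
Your proposal is correct and follows the paper's proof. Both define $J_{\Phi}$ and $J_{\mathcal{C}}$ as discounted expectations, form $\mathcal{L}(\pi,\lambda)=J_{\Phi}(\pi)-\lambda\,(J_{\mathcal{C}}(\pi)-\beta)$, and use linearity to get $\mathcal{L}(\pi,\lambda)=V_R^{\pi}+\lambda\beta$, so for fixed $\lambda\ge 0$ the maximizers coincide. The paper stops at this fixed-$\lambda$ identity, so your strong-duality remarks (existence of $\lambda^*(\beta)$, the elementary converse, randomization, augmenting the state with $t$ because of $\delta(t)$) are extra material beyond it, and the occupancy-polytope step as stated holds only for finite state--action spaces and horizons.
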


\begin{proof}
We define the primal CMDP optimization problem as:
\begin{align}
    \text{Maximize: } & J_{\Phi}(\pi) = \mathbb{E}_{\pi} \left[ \sum_{t=0}^{\infty} \gamma^t \left( \gamma \Phi(s_{t+1}) - \Phi(s_t) \right) \right] \\
    \text{Subject to: } & J_{\mathcal{C}}(\pi) = \mathbb{E}_{\pi} \left[ \sum_{t=0}^{\infty} \gamma^t \mathcal{C}(a_t) \right] \leq \beta.
\end{align}
The Lagrangian function $\mathcal{L}(\pi, \lambda)$ with Lagrange multiplier $\lambda \geq 0$ is constructed as:
\begin{equation}
    \mathcal{L}(\pi, \lambda) = J_{\Phi}(\pi) - \lambda (J_{\mathcal{C}}(\pi) - \beta).
\end{equation}
Expanding the expectations and rearranging terms by the linearity of the summation:
\begin{align}
    \mathcal{L}(\pi, \lambda) &= \mathbb{E}_{\pi} \left[ \sum_{t=0}^{\infty} \gamma^t \left( \gamma \Phi(s_{t+1}) - \Phi(s_t) \right) - \lambda \sum_{t=0}^{\infty} \gamma^t \mathcal{C}(a_t) \right] + \lambda \beta \\
    &= \mathbb{E}_{\pi} \left[ \sum_{t=0}^{\infty} \gamma^t \left( \underbrace{\gamma \Phi(s_{t+1}) - \Phi(s_t) - \lambda \mathcal{C}(a_t)}_{R(s_t, a_t, s_{t+1})} \right) \right] + \text{const.}
\end{align}
Thus, finding the optimal policy $\pi^*$ that maximizes the Lagrangian $\mathcal{L}(\pi, \lambda)$ for a fixed $\lambda$ corresponds exactly to maximizing the value function defined by the composite reward $R$. The parameter $\lambda$ serves as the dual variable governing the trade-off between physiological stabilization and intervention cost.
\end{proof}
\section{Hyperparameters and Experimental Details}
We separate the data into 7:1:1:1 split on patient level as training set for policy, training set for reward, test set for policy and test set for reward. All the experiments are run with 5 different seeds.
\subsection{Computation Resources}
All experiments were conducted on a high-performance computing cluster node equipped with 8 $\times$ NVIDIA H200 NVL GPUs.

\subsection{Selection of LLM}
For the CodeGen baselines, we intentionally selected the most powerful state-of-the-art proprietary models available. Since the baseline approach involves a simple, single-pass inference to generate a policy, the computational cost is manageable even with expensive APIs. This ensures our method is compared against the strongest possible "zero-shot" clinical agents, establishing a rigorous upper bound for performance without regarding training constraints. In contrast, our medR framework requires generating dense reward signals across vast state-action spaces, making the use of commercial APIs prohibitively expensive and computationally intractable for large-scale training. Consequently, we employ smaller, open-source models that can be deployed locally. A key finding of our work is that medR, even when powered by these smaller, accessible models, consistently outperforms baselines using significantly larger, proprietary LLMs. This validates that our improvements stem from the theoretical rigor of our potential-based Lagrangian framework rather than raw model scale.

\subsection{Reward Function Hyperparameters}
The discount factor $\gamma$ was set to 0.99 for all environments. For feature selection we use $\kappa=0.6$ for the threshold and the number of features selected $k=7$. We generate $N=20$ functions as reward candidates for selection. For $J_{surv}$ epsilon is 2. For $J_{comp}$ we use $k=10$ for $H$, $\alpha=0.1$ for $E$.

\subsection{RL Policy Hyperparameters}
\paragraph{Sepsis Treatment} The Sepsis task utilizes the MIMIC-III dataset with a state dimension of 46 and a discrete action space of 25. The policy network is constructed with 2 hidden layers of 32 units each. Optimization was performed with a batch size of 512 and a learning rate of $1 \times 10^{-3}$ over 10 epochs.

\paragraph{Mechanical Ventilation} For the Ventilation task, data is sourced from the eICU database. The environment consists of a 41-dimensional state space and a discrete action space of size 18. The network architecture uses 3 hidden layers with 32 units per layer. Training used a batch size of 512, a learning rate of $5 \times 10^{-4}$, and ran for 10 epochs.

\paragraph{Renal Replacement Therapy} The RRT task is based on the AmsterdamUMCdb dataset, featuring a compact state dimension of 18 and a binary action space (size 1). The network is deeper but narrower, using 3 hidden layers with 16 units each. Due to the smaller dataset size, a smaller batch size of 256 was used, with a learning rate of $1 \times 10^{-3}$ for 10 epochs.

\subsection{Feature Selection}
For each task, a subset of critical features was identified to inform the LLM-generated reward potential. These features are listed below:
\begin{itemize}
    \item Sepsis: SOFA score, Base Excess, Lactate, Urine Output, Mean Blood Pressure, Heart Rate.
    \item Ventilation: Mean Blood Pressure, SOFA score, Lactate, RASS, pH, SpO2, PaCO2.
    \item RRT: pH, Potassium, Mean Blood Pressure, Creatinine, Urine Output, SOFA score, Heart Rate.
\end{itemize}

\section{Related Work}
\subsection{Reward Design in Healthcare RL:} Traditional RL methodologies in healthcare typically define reward functions using clinical indicators, such as changes in SOFA scores, expert-derived heuristics, or terminal outcomes, including 90-day mortality. A foundational example is the AI Clinician by \cite{komorowski2018artificial}, which employed sparse, outcome-based rewards reflecting 90-day survival. A subsequent analysis\cite{huang2022reinforcement} introduced safety constraints by limiting actions to those frequently taken by clinicians, further restricting exploration. \cite{raghu2017continuous} addressed the limitations of sparse signals by incorporating intermediate clinical metrics such as SOFA, lactate, and increasing feedback frequency through their heuristic-based approach, which risks encoding clinical bias. \cite{raghu2018model} and \cite{peng2018improving} advanced this direction by using model-based shaping, where predicted mortality log-odds serve as reward signals. While this enhances continuity, it introduces dependence on opaque, learned predictors. \cite{futoma2017improved} employed high-fidelity simulators to define reward functions, though such strategies are resource-intensive and may lack clinical transparency.  Despite these advances, most reward functions rely on proxy measures that lack semantic clarity and deep clinical grounding. Reward design thus remains an open challenge, necessitating signals that are temporally dense, interpretable, and ethically aligned.

\subsection{LLMs in Clinical Settings} LLMs have transformed natural language tasks in clinical medicine \cite{thirunavukarasu2023large,haltaufderheide2024ethics,shool2025systematic}, aiding in medical question answering \cite{singhal2025toward}, policy critique \cite{yang2025lighthouse}, and patient summaries\cite{asgari2025framework}. However, their application in structured reward modelling for healthcare RL remains largely unexplored, presenting a promising opportunity. Recent studies have started exploring how LLMs can enhance RL in healthcare by improving sample efficiency and reward design.

For example,\cite{bhambri2024extracting} uses LLMs for abstract plans in potential-based reward engineering, boosting sample efficiency in toy environments, but their method relies on handcrafted abstractions and external verifiers.\cite{wu2024from} proposes Q-engineering, which utilises LLM heuristics to shape Q-values, thereby improving efficiency but relying on task-specific heuristics and lacking interpretability. \cite{zhang2024llm} shows how LLMs refine intrinsic rewards through code synthesis in rule-based environments, but their method is constrained by fixed domain rules and static strategies.

These methods advance LLM-guided reward design; however, they remain limited by the use of symbolic models, heuristics, and predefined knowledge, which reduces their applicability in complex clinical settings. These limitations are highlighted by \cite{gao2024designing}, who explore reward models but face reward hacking issues.\cite{pternea2024rl} proposes an RL/LLM taxonomy but doesn’t address real-world clinical challenges, while \cite{sun2023reinforcement} reviews RLHF but struggles with sparse feedback in healthcare. In contrast, our approach leverages LLMs for temporal credit assignment in clinical settings, transforming sparse outcomes into clinically interpretable rewards from raw patient data, without relying on external verifiers or predefined rules, enabling scalable, adaptive learning for dynamic healthcare environments.\\

Recent studies have made significant strides in LLM-based reward engineering for RL. \cite{nazir2025zero} use zero-shot LLMs to replace human feedback, improving sample efficiency.\cite{fu2025reward} introduces PAR, addressing reward hacking in RLHF. \ \cite{deng2025reward} proposes the LMGT framework to balance exploration and exploitation, while \cite{wang2025spa} develop SPA-RL for stepwise reward attribution. \cite{zhang2024llm} proposes a simple framework using LLMs for intrinsic reward generation.\\

Despite these advancements, our approach is unique. Unlike methods that rely on predefined models or biased human feedback, we utilise raw clinical data to generate clinically interpretable rewards, thereby eliminating the need for external verifiers or predefined rules and offering a scalable, adaptive solution for real-time healthcare applications.

\subsection{Credit Assignment} Temporal credit assignment remains a major challenge in RL, for complex, long-horizon tasks. Traditional methods, such as TD-lambda (\cite{sun2023reinforcement}) and GAE (\cite{schulman2015high}), focus on temporal differences but lack semantic reasoning, which limits their performance. Recent advances, such as Universal GAE by \cite{kwiatkowski2023ugae}, improve generalisation. Meanwhile, transformer-based TD learning by \cite{wang2025transformers}, which integrates attention-based models, addresses some limitations but still falls short in complex, real-world tasks. In addition, some works \cite{wu2023forecasting,xiong2024g,deng2024uncertainty} attempt to analyze the timing of action provision from a causal perspective.

LLMs have emerged as powerful solutions, bringing new levels of contextual understanding and reasoning to RL, effectively tackling long-standing challenges in temporal credit assignments. For example, \cite{pignatelli2024assessing} introduces CALM, where LLMs act as critics to decompose tasks into subgoals and assign rewards in zero-shot settings; \cite{jiang2025qllm} presents QLLM, replacing complex mixing networks in multi-agent RL with LLM-generated, interpretable credit functions; and \cite{qu2025latent} propose LaRe, using LLMs to derive multi-dimensional latent rewards that capture nuanced contributions in episodic tasks. Collectively, these works establish LLMs as scalable, interpretable, and powerful tools for automating credit assignments in both single- and multi-agent environments.\\

Our approach, \textbf{medR}, uniquely advances this line by leveraging real clinical data to integrate Large Language Models with potential-based reward shaping. Unlike CALM’s reliance on toy subgoals \cite{pignatelli2024assessing}, QLLM’s multi-agent focus \cite{pignatelli2024assessing}, or LaRe’s \cite{qu2025latent} latent outcome alignment, medR transforms sparse terminal outcomes into dense, mathematically grounded learning signals. By prompting LLMs to estimate the physiological stability of patient states ($\Phi(s)$), we construct a regularized reward structure that explicitly balances clinical improvement with intervention cost. This formulation solves a Lagrangian-constrained optimization problem, bridging the gap between high-level medical reasoning and precise temporal credit assignment in critical care.\\

\section{Analysis of Tri-Drive Fitness}
To assess the reward functions before training, we utilize the three proposed fitness objectives: Survival ($J_{\text{surv}}$), Confidence ($J_{\text{conf}}$), and Competence ($J_{\text{comp}}$). We acknowledge that baseline rewards explicitly designed with sparse outcome signals will naturally exhibit high $J_{\text{surv}}$ scores by definition, as they directly encode the survival target. Consequently, a superior reward function should not merely maximize $J_{\text{surv}}$ but demonstrate a robust balance across all three Pareto dimensions, particularly in Competence ($J_{\text{comp}}$) and Confidence ($J_{\text{conf}}$). We use J in both reward selection and evaluation since it is an off-policy evaluation functional of a fixed dataset and a fixed clinician policy, not a training objective. Reward selection and policy training are decoupled in our setting. We treat reward design as a model selection problem, and J as a task-level performance criterion that is external to the learning dynamics.\\
\begin{figure}[htbp]
    \centering
    \includegraphics[width=\linewidth]{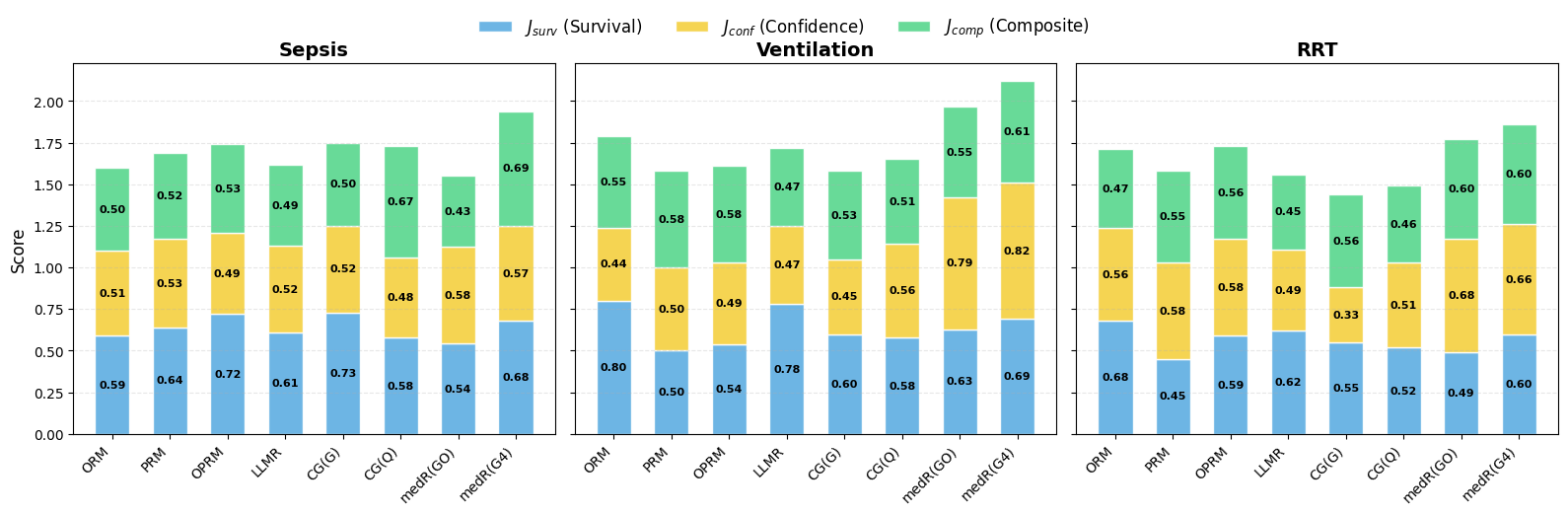}
    \caption{Decomposition of Tri-Drive fitness across tasks. The stacked bars represent the contribution of three distinct evaluation metrics: Survival ($J_{surv}$), Confidence ($J_{conf}$), and Competence ($J_{comp}$).}
    \label{fig:j_bar}
\end{figure}

\section{Features and Actions}
\label{sec:feature}

\subsection{Sepsis}
The sepsis cohort utilizes 46 observation features, comprising demographics, vital signs, and a comprehensive set of laboratory values, as detailed in Table \ref{table:sepsis_features}. For the action space, we consider two primary interventions: IV fluids and Vasopressors. The dosage for each intervention is discretized into four quantile bins plus a no-treatment option, resulting in 5 discrete levels per drug. The joint action space is the Cartesian product of these levels, yielding \(5 \times 5 = 25\) discrete treatment combinations.

\begin{table}[htbp]
\centering
\small
\caption{List of features for the Sepsis task.}
\begin{tabular}{p{2.5cm}p{10.5cm}}
\toprule
\textbf{Category} & \textbf{Feature Name} \\ \midrule
Demographics (6) &
  Age, Gender, Weight, Readmission, Elixhauser Score, Mechanical Ventilation Status \\ \midrule
Vital Signs (8) &
  Heart Rate, Respiratory Rate, SpO\textsubscript{2}, Temperature, SBP, DBP, MBP, Shock Index \\ \midrule
Lab Values (30) &
  Lactate, PaO\textsubscript{2}, PaCO\textsubscript{2}, pH, Base Excess, CO\textsubscript{2}, Hemoglobin, PaO\textsubscript{2}/FiO\textsubscript{2}, WBC, Platelet, BUN, Creatinine, PTT, PT, INR, AST, ALT, Bilirubin, Magnesium, Ionized Calcium, Calcium, Urine Output, Potassium, Sodium, Chloride, Glucose, Albumin, Bicarbonate, FiO\textsubscript{2} \\ \midrule
Scores (2) &
  SOFA (24hr), SIRS, GCS \\ \bottomrule
\end{tabular}
\begin{flushleft}
\scriptsize Abbreviations - SBP/DBP/MBP: Systolic/Diastolic/Mean Blood Pressure; GCS: Glasgow Coma Scale; BUN: Blood Urea Nitrogen; WBC: White Blood Cell; PTT: Partial Thromboplastin Time; PT: Prothrombin Time; INR: International Normalized Ratio; AST: Aspartate Aminotransferase; ALT: Alanine Aminotransferase.
\end{flushleft}
\label{table:sepsis_features}
\end{table}

\subsection{Mechanical Ventilation (MV)}
For the ventilation task, we utilize 41 features selected for training, listed in Table \ref{table:mv_features}. The action space consists of 18 discrete combinations derived from three ventilator settings: Positive End-Expiratory Pressure (PEEP), Fraction of Inspired Oxygen (FiO\textsubscript{2}), and Tidal Volume adjusted for ideal body weight. These are categorized into Low, Medium, and High levels as summarized in Table \ref{tab:mv_action_levels}.

\begin{table}[htbp]
\centering
\small
\caption{List of features for the Mechanical Ventilation task.}
\begin{tabular}{p{2.5cm}p{10.5cm}}
\toprule
\textbf{Category} & \textbf{Feature Name} \\ \midrule
Demographics (4) &
  Age, Gender, Weight, Elixhauser Score \\ \midrule
Vital Signs (7) &
  Heart Rate, Respiratory Rate, SpO\textsubscript{2}, Temperature, SBP, DBP, MBP \\ \midrule
Lab Values (26) &
  Lactate, Bicarbonate, PaCO\textsubscript{2}, pH, Base Excess, Chloride, Potassium, Sodium, Glucose, Hemoglobin, Ionized Calcium, Calcium, Magnesium, Albumin, BUN, Creatinine, WBC, Platelet, PTT, PT, INR, ETCO\textsubscript{2}, Urine Output, Rate Std, Analgesic/Sedative Admin, Neuromuscular Blocker Admin \\ \midrule
Scores (4) &
  GCS, SOFA (24hr), SIRS (24hr), RASS \\ \bottomrule
\end{tabular}
\label{table:mv_features}
\end{table}

\begin{table}[h!]
\centering
\caption{Categorization of action levels for PEEP, FiO\textsubscript{2} and Tidal Volume in the MV task.}
\begin{tabular}{ccc}
\toprule
\textbf{Intervention} & \textbf{Category} & \textbf{Threshold}\\
\midrule
\multirow{2}{*}{PEEP (cmH\textsubscript{2}O)} & Low     & $\leq 5$ \\
                                              & High    & $> 5$  \\
\midrule
\multirow{3}{*}{FiO\textsubscript{2} (\%)}    & Low     & $< 35$    \\
                                              & Medium  & $35$–$50$   \\
                                              & High    & $\geq 50$  \\
\midrule
\multirow{3}{*}{Tidal Volume (ml/kg IBW)}     & Low     & $< 6.5$  \\
                                              & Medium  & $6.5$–$8$  \\
                                              & High    & $\geq 8$  \\
\bottomrule
\end{tabular}
\label{tab:mv_action_levels}
\end{table}

\subsection{Renal Replacement Therapy (RRT)}
The RRT task utilizes a specific subset of 18 features focusing on renal function and hemodynamic stability, as shown in Table \ref{tab:rrt_features}. Unlike the multi-dimensional discrete actions in Sepsis and Ventilation, the RRT action space is defined as a single dimension representing the aggregate dosage of renal replacement therapy.

\begin{table}[!htbp]
\centering
\small
\caption{List of features for the RRT task.}
\resizebox{\linewidth}{!}{
\begin{tabular}{p{2.5cm}p{10.5cm}}
\toprule
\textbf{Category} & \textbf{Feature Name} \\ \midrule
Demographics (3) &
  Age, Gender, Weight \\ \midrule
Vital Signs (5) &
  Heart Rate, SBP, MBP, DBP, SpO\textsubscript{2} \\ \midrule
Lab Values (8) &
  Urea, Creatinine, pH, Bicarbonate, Potassium, Sodium, PaO\textsubscript{2}/FiO\textsubscript{2}, SOFA (24hr) \\ \midrule
Urine Output (2) &
  Total Urine Output (6hr), Urine Output Change Rate (6hr) \\ \bottomrule
\end{tabular}}
\label{tab:rrt_features}
\end{table}

\section{Behavior Policy}
To approximate the clinician's decision-making process and facilitate Off-Policy Evaluation (OPE), we developed a behavior policy parameterized by a Multi-Layer Perceptron (MLP). This policy was trained via supervised learning to imitate observed clinical actions. Across all tasks, models were trained for 100 epochs, with architectures and hyperparameters optimized for each specific domain. For \textbf{Sepsis}, we employed a 3-layer network with 32 hidden units (batch size 256, learning rate $5 \times 10^{-4}$). The \textbf{Ventilation} task utilized a compact 2-layer architecture with 8 hidden units (batch size 128, learning rate $1 \times 10^{-3}$), while the \textbf{RRT} model consisted of 3 layers with 16 hidden units (batch size 128, learning rate $5 \times 10^{-4}$).

\section{Pseudocodes}
\begin{algorithm}[ht]
\caption{Reward Engineering via Tri-Drive Potential Functions}
\label{alg:reward}
\begin{algorithmic}
   \STATE {\bfseries Input:} Clinical Dataset $\mathcal{D}$, LLM $\mathcal{M}$, Candidate Features $\mathcal{F}$
   \STATE {\bfseries Output:} Pareto-Optimal Set of Reward Functions $\mathcal{R}^*$
   
   \vspace{0.1cm}
   \STATE \textcolor{gray}{// Phase 1: Interpretable Feature Selection}
   \STATE Calculate statistical metadata $M_f$ for all $f \in \mathcal{F}$ (missingness, correlation $\rho$)
   \STATE $F_{crit} \leftarrow \text{EnsembleSelect}(\mathcal{M}(M_f, \text{Prompt}_{feat}))$ 
   
   \vspace{0.1cm}
   \STATE \textcolor{gray}{// Phase 2: Candidate Reward Function Generation}
   \STATE Initialize Candidate Set $\mathcal{C} \leftarrow \emptyset$
   \FOR{$i = 1$ to $N$}
       \STATE $r_i \leftarrow \text{ExtractCode}(\mathcal{M}(\text{Prompt}_{gen}, F_{crit}))$
       \STATE Add $r_i$ to $\mathcal{C}$
   \ENDFOR
   
   \vspace{0.1cm}
   \STATE \textcolor{gray}{// Phase 3: Multi-Objective Selection}
   \STATE For all $r \in \mathcal{C}$, compute Offline Fitness Vector:
   \STATE \quad $\mathbf{J}(\Phi) = [J_{surv}, J_{conf}, J_{comp}]$ 
       
   
   \STATE \textbf{Finalize:} $\mathcal{R}^* \leftarrow \text{ParetoFront}(\mathcal{C})$
\end{algorithmic}
\end{algorithm}
\section{Comparison of Action Distributions between RL and Clinicians}
\begin{table}[htbp]
\centering
\caption{Agreement rates between AI policy and Clinician actions across three clinical tasks. Values represent Mean $\pm$ Standard Deviation across seeds. The **Joint** column represents the exact match rate across all action dimensions combined. \textbf{Bold} indicates the highest agreement in each category.}
\label{tab:action_agreement}

\resizebox{\textwidth}{!}{%
\begin{tabular}{l ccc cccc c}
\toprule
\multirow{2}{*}{\textbf{Method}} & \multicolumn{3}{c}{\textbf{Sepsis Agreement (\%)}} & \multicolumn{4}{c}{\textbf{Ventilation Agreement (\%)}} & \textbf{RRT (\%)} \\
\cmidrule(lr){2-4} \cmidrule(lr){5-8} \cmidrule(lr){9-9}
& \textbf{Joint} & \textbf{IV} & \textbf{Vaso} & \textbf{Joint} & \textbf{PEEP} & \textbf{FiO2} & \textbf{Tidal} & \textbf{Dose} \\
\midrule

ORM 
& $23.99 \pm 0.40$ & $29.06 \pm 0.23$ & $75.57 \pm 0.89$ 
& $21.28 \pm 2.20$ & $72.97 \pm 0.36$ & $51.66 \pm 5.10$ & $47.28 \pm 0.40$ 
& $54.39 \pm 9.23$ \\

PRM 
& $25.82 \pm 0.55$ & $31.62 \pm 0.71$ & $76.34 \pm 0.65$ 
& $22.17 \pm 2.72$ & $75.29 \pm 1.64$ & $\mathbf{55.29 \pm 5.48}$ & $46.06 \pm 0.50$ 
& $52.54 \pm 11.33$ \\

OPRM 
& $25.50 \pm 0.54$ & $31.13 \pm 0.60$ & $76.34 \pm 0.81$ 
& $20.81 \pm 2.50$ & $73.01 \pm 0.68$ & $54.63 \pm 2.65$ & $42.88 \pm 1.79$ 
& $60.43 \pm 3.12$ \\

\midrule

CodeGen$_{\texttt{+LLM}}$ 
& $25.72 \pm 0.38$ & $31.80 \pm 0.40$ & $76.29 \pm 0.73$ 
& $17.43 \pm 4.15$ & $68.40 \pm 0.32$ & $50.00 \pm 7.55$ & $38.18 \pm 3.78$ 
& $60.14 \pm 2.47$ \\

CodeGen$_{\texttt{+GPT-4}}$ 
& $25.79 \pm 0.45$ & $31.58 \pm 0.21$ & $76.43 \pm 0.77$ 
& $22.39 \pm 1.01$ & $75.50 \pm 3.98$ & $52.42 \pm 2.93$ & $47.30 \pm 3.25$ 
& $53.30 \pm 10.30$ \\

CodeGen$_{\texttt{+Qwen}}$ 
& $25.89 \pm 0.52$ & $31.85 \pm 0.51$ & $76.45 \pm 0.70$ 
& $20.84 \pm 4.16$ & $\mathbf{75.77 \pm 1.48}$ & $54.49 \pm 4.41$ & $44.44 \pm 2.98$ 
& $54.11 \pm 9.49$ \\

\midrule

\textbf{medR$_{\texttt{+GPT-OSS-20B}}$} 
& $25.93 \pm 0.52$ & $32.21 \pm 0.47$ & $\mathbf{76.46 \pm 0.79}$ 
& $21.82 \pm 0.48$ & $71.72 \pm 0.05$ & $51.82 \pm 3.10$ & $48.03 \pm 0.55$ 
& $54.35 \pm 9.18$ \\

\textbf{medR$_{\texttt{+GPT-4}}$} 
& $\mathbf{26.02 \pm 0.33}$ & $\mathbf{32.22 \pm 0.35}$ & $\mathbf{76.46 \pm 0.74}$ 
& $\mathbf{22.59 \pm 0.47}$ & $71.53 \pm 0.41$ & $51.50 \pm 3.64$ & $\mathbf{48.20 \pm 0.78}$ 
& $\mathbf{65.99 \pm 2.46}$ \\

\bottomrule
\end{tabular}%
}
\end{table}

\begin{figure}[h]
    \centering
    \includegraphics[width=\linewidth]{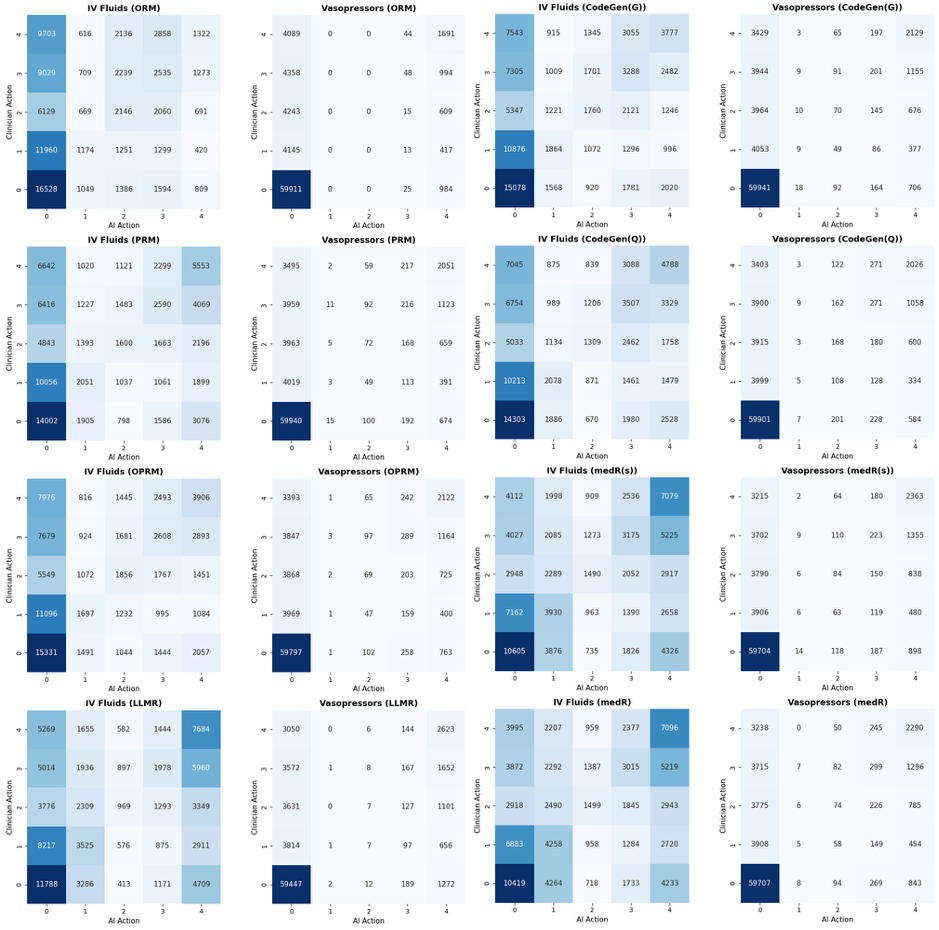} 
    \caption{\textbf{Sepsis Policy Agreement.} Confusion matrices comparing AI vs. Clinician actions for IV Fluids (left) and Vasopressors (right) across different reward functions.}
    \label{fig:sepsis_agreement}
\end{figure}
\begin{figure}[h]
    \centering
    \includegraphics[width=\linewidth]{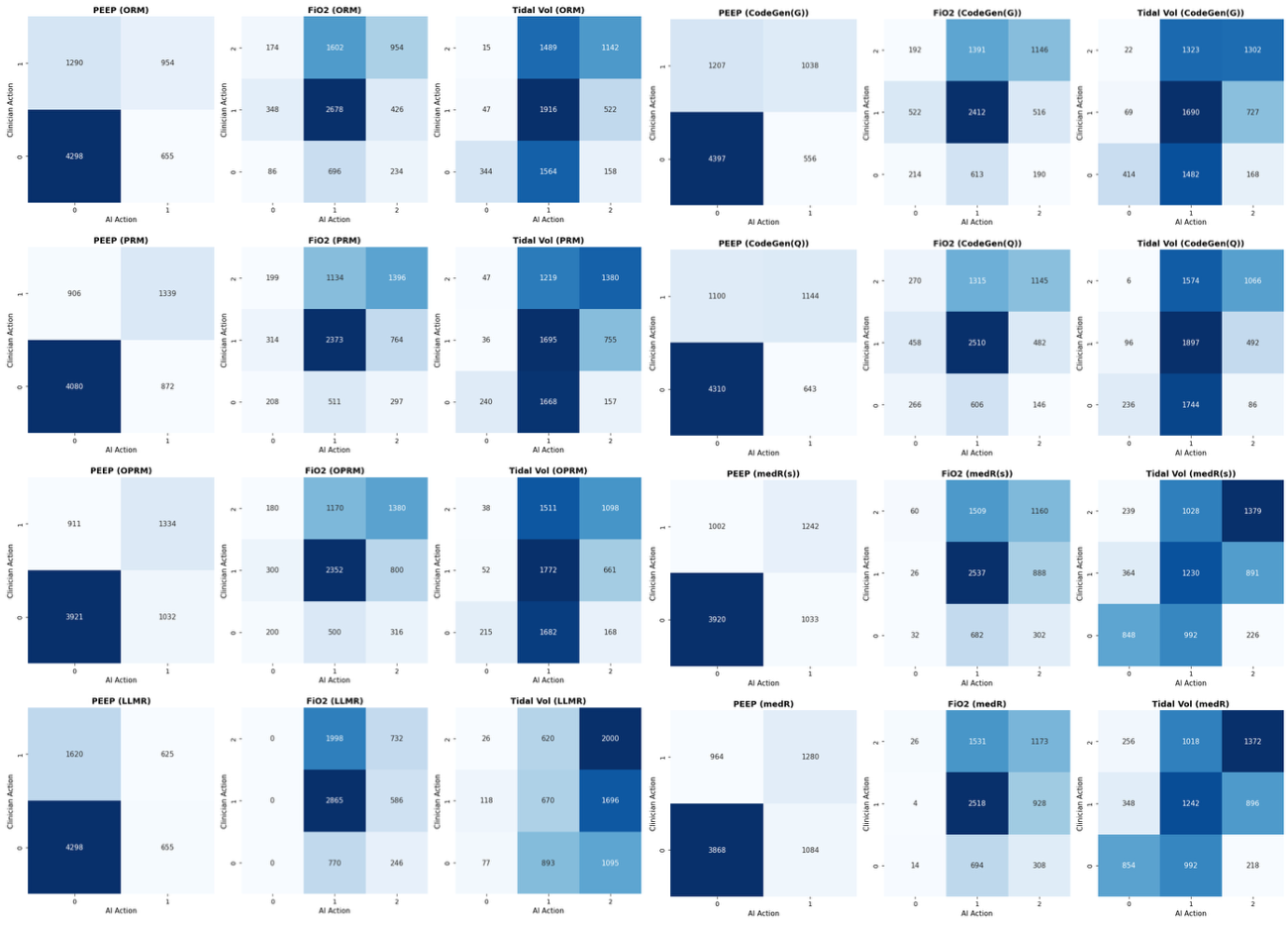} 
    \caption{\textbf{Ventilation Policy Agreement.} Comparison of discrete action choices for PEEP, FiO$_2$, and Tidal Volume. Darker diagonals indicate higher concordance with clinical standards.}
    \label{fig:ventilation_agreement}
\end{figure}
\begin{figure}[h]
    \centering
    \includegraphics[width=\linewidth]{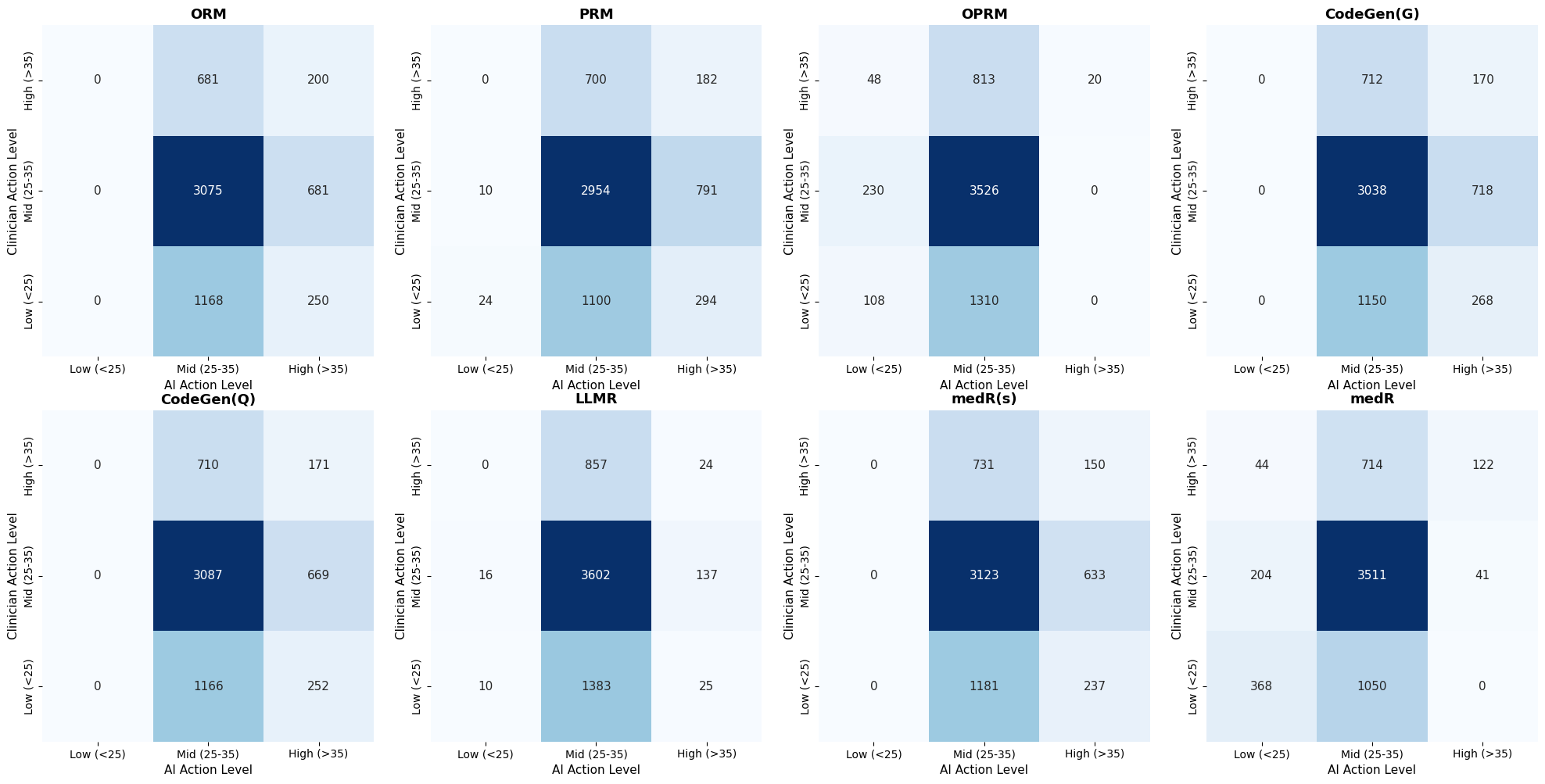} 
    \caption{\textbf{RRT Dosing Agreement.} Analysis of continuous dosing converted to discrete clinical bins (Low, Mid, High). The proposed method shows stronger alignment in the 'Mid' (Standard) dosing range.}
    \label{fig:rrt_agreement}
\end{figure}
\end{document}